\documentclass[sigconf]{acmart}

\AtBeginDocument{%
  }

\copyrightyear{2026}
\acmYear{2026}
\setcopyright{cc}
\setcctype{by}
\acmConference[KDD '26]{Proceedings of the 32nd ACM SIGKDD Conference on Knowledge Discovery and Data Mining V.2}{August 09--13, 2026}{Jeju Island, Republic of Korea}
\acmBooktitle{Proceedings of the 32nd ACM SIGKDD Conference on Knowledge Discovery and Data Mining V.2 (KDD '26), August 09--13, 2026, Jeju Island, Republic of Korea}
\acmDOI{10.1145/3770855.3817708}
\acmISBN{979-8-4007-2259-2/2026/08}

\usepackage{hyperref}
\usepackage{url}

\usepackage{amsmath}
\usepackage{bm}
\usepackage{mathtools}
\usepackage{amsthm}
\usepackage{thm-restate}
\usepackage{thmtools} 

\usepackage{here}

\usepackage{booktabs}  
\usepackage{multirow}
\usepackage{algorithm}
\usepackage{algpseudocode}

\usepackage{float}
\usepackage{wrapfig}

\usepackage{enumitem}

\newcommand \indep{\mathop{\perp\!\!\!\!\perp}}

\DeclareMathOperator{\E}{\mathbb{E}}
\DeclareMathOperator{\V}{\mathbb{V}}
\DeclareMathOperator{\R}{\mathbb{R}}
\DeclareMathOperator{\pr}{\mathrm{P}}

\newcommand{\biX}{\textbf{\textit{X}}}
\newcommand{\bix}{\textbf{\textit{x}}}
\newcommand{\biE}{\textbf{\textit{E}}}

\newcommand{\nf}{d} 
\newcommand{\nfi}{j} 
\newcommand{\pafi}{\biX_{\mathrm{pa}(\nfi)}} 
\newcommand{\efi}{\biE_{\nfi}} 
\newcommand{\esfi}{E_{\nfi}} 
\newcommand{\cg}{G} 
\newcommand{\mfi}{m_{\nfi}(\pafi)} 
\newcommand{\vfi}{v_{\nfi}(\pafi)} 
\newcommand{\cgm}{G^{M}} 
\newcommand{\cgv}{G^{V}} 
\newcommand{\pamfi}{\biX_{\mathrm{pa}^M(\nfi)}} 
\newcommand{\pavfi}{\biX_{\mathrm{pa}^V(\nfi)}} 
\newcommand{\pammfi}{\biX_{\mathrm{pa'}^M(\nfi)}} 
\newcommand{\pavvfi}{\biX_{\mathrm{pa'}^V(\nfi)}} 

\newcommand{\mmfi}{m_{\nfi}\left(\pamfi \right)} 
\newcommand{\vvfi}{v_{\nfi}\left(\pavfi \right)} 

\newcommand{\biQ}{\textbf{\textit{Q}}} 
\newcommand{\biR}{\textbf{\textit{R}}} 
\newcommand{\biS}{\textbf{\textit{S}}} 
\newcommand{\biq}{\textbf{\textit{q}}} 
\newcommand{\bir}{\textbf{\textit{r}}} 
\newcommand{\bis}{\textbf{\textit{s}}} 
\newcommand{\biK}{\textbf{\textit{K}}} 

\newcommand{\biv}{\textbf{\textit{v}}} 

\newcommand{\biw}{\textbf{\textit{w}}}

\DeclareMathOperator{\prgs}{\mathrm{P}_{\Phi}}

\newcommand{\biA}{\mathbf{A}}

\newcommand{\biPi}{\mathbf{\Pi}}

\newcommand{\biAm}{\mathbf{A}^M}
\newcommand{\biAv}{\mathbf{A}^V}
\newcommand{\biUm}{\mathbf{U}^M}
\newcommand{\biUv}{\mathbf{U}^V}
\newcommand{\gbiUm}{\mathbf{\tilde{U}}^M}
\newcommand{\gbiUv}{\mathbf{\tilde{U}}^V}
\newcommand{\gbiPi}{{\tilde{\boldsymbol \Pi}}}

\newcommand{\gUm}{\tilde{U}^M}
\newcommand{\gUv}{\tilde{U}^V}
\newcommand{\bigumb}{\mathbf{g}}

\newcommand{\biphim}{{\boldsymbol \phi}^M}
\newcommand{\biphiv}{{\boldsymbol \phi}^V}
\newcommand{\bithetamj}{{\boldsymbol \theta}^M_{\nfi}}
\newcommand{\bithetavj}{{\boldsymbol \theta}^V_{\nfi}}

\newcommand{\bipsi}{\boldsymbol \psi}
\newcommand{\birho}{\boldsymbol \rho}

\newcommand{\biAmj}{\textbf{\textit{A}}^M_{\nfi}}
\newcommand{\biAvj}{\textbf{\textit{A}}^V_{\nfi}}

\newcommand{\biB}{\mathbf{B}}

\newcommand \myra{i}
\newcommand \myrb{i\hspace{-1pt}i}

\theoremstyle{plain}

\declaretheoremstyle[
  headpunct={:},
  headfont=\bfseries,          
  bodyfont=\itshape,           
  notefont=\normalfont,  
]{amsthmstyle}

\declaretheorem[
  style=amsthmstyle,
  name=Theorem,
  numberwithin=section
]{theorem}

\newtheorem{corollary}[theorem]{Corollary}
\theoremstyle{definition}

\theoremstyle{remark}
\newtheorem{remark}[theorem]{Remark}

\newtheorem{example}[theorem]{Example} 

\declaretheorem[
  style=amsthmstyle,
  sibling=theorem,
  name=Assumption,
  numberwithin=section
]{asmp-restate}

\declaretheorem[
  style=amsthmstyle,
  sibling=theorem,
  name=Theorem,
  numberwithin=section
]{thm-restate}

\definecolor{navyblue}{rgb}{0.0, 0.0, 0.5}
\hypersetup{
    colorlinks=true,
    allcolors=navyblue
}
\usepackage[capitalize,noabbrev]{cleveref}
\renewcommand{\eqref}[1]{(\ref{#1})}

\begin{document}

\title{Moment Matters: 
Mean and Variance Causal Graph Discovery from Heteroscedastic Observational Data}

\author{Yoichi Chikahara}
\orcid{0000-0002-9377-9046}
\affiliation{%
\department{Communication Science Laboratories}
  \institution{NTT, Inc.}
  \city{Kyoto}
  \country{Japan}
}
\email{chikahara.yoichi@gmail.com}

\begin{abstract}
Heteroscedasticity---where the variance of a variable changes with other variables---is pervasive in real data, and
elucidating why it arises from the perspective of statistical moments is crucial in scientific knowledge discovery and decision-making. 
However, standard causal discovery does not reveal which causes act on the mean versus the variance, as it returns a single moment-agnostic graph, limiting interpretability and downstream intervention design.
 We propose a Bayesian, moment-driven causal discovery framework that infers separate \textit{mean} and \textit{variance} causal graphs from observational heteroscedastic data. We first derive the identification results by establishing sufficient conditions under which these two graphs are separately identifiable. Building on this theory, we develop a variational inference method that learns a posterior distribution over both graphs, enabling principled uncertainty quantification of structural features (e.g., edges, paths, and subgraphs). 
 To address the challenges of parameter optimization in heteroscedastic models with two graph structures, we take a curvature-aware optimization approach and develop a prior incorporation technique that leverages domain knowledge on node orderings, improving sample efficiency.
 Experiments on synthetic, semi-synthetic, and real data show that our approach accurately recovers mean and variance structures and outperforms state-of-the-art baselines.

\end{abstract}

\begin{CCSXML}
<ccs2012>
   <concept>
       <concept_id>10010147.10010257.10010293.10010300.10010306</concept_id>
       <concept_desc>Computing methodologies~Bayesian network models</concept_desc>
       <concept_significance>300</concept_significance>
       </concept>
   <concept>
       <concept_id>10010147.10010178.10010187.10010192</concept_id>
       <concept_desc>Computing methodologies~Causal reasoning and diagnostics</concept_desc>
       <concept_significance>500</concept_significance>
       </concept>
 </ccs2012>
\end{CCSXML}

\ccsdesc[500]{Computing methodologies~Causal reasoning and diagnostics}
\ccsdesc[300]{Computing methodologies~Bayesian network models}

\keywords{Causal Discovery, Heteroscedastic Noise Models}


\maketitle

\newcommand\kddavailabilityurl{https://doi.org/10.5281/zenodo.20373070}
\ifdefempty{\kddavailabilityurl}{https://github.com/ychika/MV-DAG}{
\begingroup\small\noindent\raggedright\textbf{Resource Availability:}\\
The source code of this paper has been made publicly available at
\url{\kddavailabilityurl}. The corresponding GitHub repository is available at
\url{https://github.com/ychika/MV-DAG}.
\endgroup
}

\section{Introduction} \label{sec-intro}

\begin{figure}[t]
  \centering 
  \includegraphics[height=2.5cm]{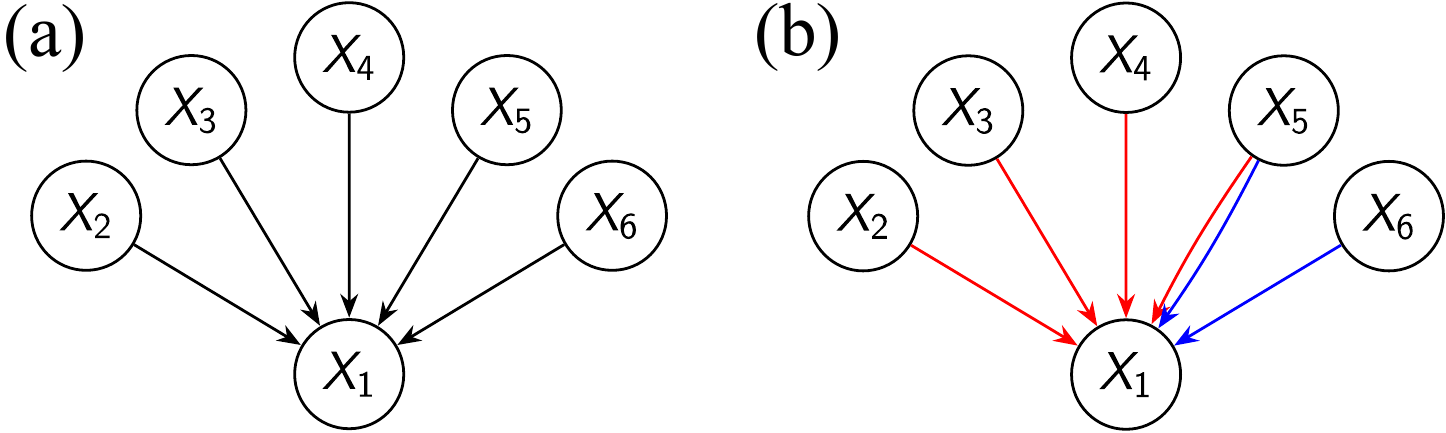}
\caption{Local protein regulatory network example represented by (a) Moment-agnostic causal graph and (b) mean and variance causal graphs (shown in red and blue edges)} 
\label{fig-ex_graph}
\end{figure}

Many scientists are actively striving to uncover the causal mechanisms that govern complex real-world phenomena. Causal discovery contributes to this goal by inferring cause-effect relationships between variables as a \textit{causal graph}, whose edge represents that one variable changes another variable values. However, such edges do not specify which statistical moments are changed by each cause, 
thereby limiting interpretability in complex systems that exhibit \textit{heteroscedasticity}, where different causes may influence the (conditional) mean and variance of each variable.
As a motivating example, below we illustrate a practical scenario for drug discovery \citep{michoel2023causal}.

\begin{example} \label{example-drug}
Consider a drug engineer who refines a compound to achieve more consistent therapeutic effects by reducing variability in a target protein’s activity across individuals.
By inferring the subgraph of underlying causal structure shown in \Cref{fig-ex_graph} (a),
the engineer selects protein $X_1$ as the downstream target and plans to intervene on its causes, $X_2, \dots, X_6$, which are pharmacologically manipulatable regulators.
 However, such standard causal graph is unsatisfactory to the engineer: Although it narrows down the candidate causes of $X_1$, 
 it does not reveal which causes influence $X_1$'s variance, 
 since such a \textit{moment-agnostic} causal graph does not distinguish the causes that affect the mean and those that influence the variance. 
 By contrast, if the engineer could obtain \textit{mean} and \textit{variance causal graphs}, which make such distinctions as shown in \Cref{fig-ex_graph} (b), they could design experiments by first focusing on achieving the desired mean of $X_1$ and subsequently varying the interventions on $X_5$ and $X_6$ to reduce $X_1$'s variance while preserving its mean. This targeted intervention strategy would greatly facilitate the drug design process for desired therapeutic outcomes.
\end{example}

 Beyond this example, the three real-world domains highlight the importance of identifying and controlling the drivers of heteroscedasticity, i.e., causes that affect the (conditional) variance of variables.
First, in \textit{systems biology}, whereas most molecules mainly shift mean expression, certain factors (e.g.,  stress-response proteins) modulate variance \citep{guilbert2020protein,boopathy2022mechanisms}; detecting such modulators deepens our understanding of the regulatory basis of cell-to-cell variability \citep{daye2012high,oyarzun2015noise}.
Second, in \textit{economics},
it is of great interest to stabilize key economic outcomes by controlling their cross-sectional variance \citep{braumoeller2006explaining}.
Third, in \textit{algorithmic fairness}, the goal is to achieve equitable outcomes  (e.g., in hiring, lending) with respect to sensitive attributes (e.g., gender, race, age, disability status) \citep{kusner2017counterfactual,wu2019pc,chikahara2021learning,chikahara2023making,zuo2024interventional}.
To promote forward-looking, fair decision-making, it is crucial to identify \textit{latent sensitive attributes}, i.e.,
features not legally designated as sensitive 
yet linked to disparities and controversial to use.
Detecting variance-level causes is helpful for this purpose, 
because as pointed out in studies in econometrics \citep{dickinson2009statistical}, 
decision rules that avoid subgroups with high outcome variance for risk aversion may induce particularly latent forms of 
statistical discrimination.

These practical real-world scenarios motivate a key research question: \textit{Can we separately identify the mean and variance causal graphs solely from observational data, collected without intervention?} 
Our answer is yes.
By extending the results on \textit{heteroscedastic noise models} (HNMs)
\citep{khemakhem2021causal,yin2024effective},
we theoretically derive the identifiability conditions for mean and variance causal graphs, whose structures may differ, depending on the moment information.

Once we have established theoretical identifiability, 
we turn to the practical challenge of inferring these two separate graphs 
from finite data.
Any two-stage point estimation approach that first infers a single moment-agnostic graph and then decomposes it into mean and variance graphs can be unreliable, especially when capturing inference uncertainty is crucial for downstream decision-making under small-sample scenarios (e.g., in medicine \citep{wiens2014study}).

This requirement raises a second research question: \textit{How can we develop a principled inference approach for quantifying the inference uncertainty of mean and variance causal graphs under data scarcity?} 
To answer this, building on the recent variational inference technique \citep{charpentier2022differentiable}, we establish a \textit{Bayesian causal discovery} approach that jointly infers the posterior distribution over the mean and variance causal graphs. 
It enables us to quantify the inference uncertainty by computing the posterior probabilities for arbitrary structural features, such as the presence of an edge, a path, or a subgraph, thereby facilitating the discovery of highly probable cause-effect relationships from finite data \citep{friedman2003being}. We demonstrate the utility of this Bayesian approach by conducting a real-world case study in \Cref{subsec-real}, where our method successfully identifies the biologically plausible variance-controlling relationship between signaling proteins from limited data.
\textbf{Our contributions} are threefold:
\begin{itemize}[leftmargin=0.5cm]
    \item \textbf{Sound definitions and theory for mean and variance graphs:} We propose a \textit{mean-variance HNM}, which is associated with mean and variance causal graphs (\Cref{subsec-problem}). We  theoretically derive their identifiability conditions in \Cref{subsec-identi}.
    \item \textbf{Principled and efficient uncertainty quantification:} We establish a variational inference framework that learns the posterior over the mean and variance causal graphs (\Cref{subsec-model,subsec-learning}). We also
    develop a prior knowledge incorporation approach for the node-ordering knowledge (\Cref{subsec-priork}). 
    \item \textbf{Strong empirical performance:} Compared to state-of-the-art baselines, our method achieves better and comparable performance in mean / variance graph inference and moment-agnostic graph estimation, respectively.
    In a real-world case study, it discovers plausible variance graph structures from limited data, demonstrating its practical utility for knowledge discovery.
\end{itemize}

\section{Background to Structural Causal Models} \label{sec-background}

A structural causal model (SCM) \citep{pearl2009causality} describes a data-generation process over a set of \textit{endogeneous variables} $\biX$, 
whose causal relationships are of interest. 
Formally, given a set of other random variables called \textit{exogenous variables} $\biE$ (e.g., noise variables)
and a set of deterministic functions $\mathcal{F}$,
an SCM defines a \textit{structural equation},
which determines the values of each $X_{\nfi} \in \biX$ 
($\nfi = 1, \dots, \nf$) as 
the output of deterministic function $f_j \in \mathcal{F}$:
\begin{align}
X_{\nfi} = f_{\nfi}(\pafi, \efi) \quad \mbox{for} \quad \nfi = 1, \dots, \nf, \label{eq-SEM}
\end{align}
where $\pafi \subseteq \biX \backslash X_{\nfi}$ and $\efi \subseteq \biE$ are subsets of endogenous and exogenous variables. 
Variables $\pafi$ directly 
affect $X_{\nfi}$'s values 
and are hence referred to as $X_{\nfi}$'s \textit{parents}.

Causal graph $\cg$ encodes these parental relationships: It has edge $X_i \rightarrow X_{\nfi}$ for $i, \nfi \in \{1, \dots , \nf\}$ and $i \neq \nfi$ if and only if $X_i \in \pafi$.
We define \textit{adjacency matrix} $\biA \in \{0, 1\}^{\nf \times \nf}$ of $\cg$ by  $A_{i, \nfi} = 1$ if $X_i \rightarrow X_{\nfi}$; otherwise, $A_{i, \nfi} = 0$.
In this paper,
we call $\cg$ a moment-agnostic causal graph,
as it does not distinguish causes by the statistical moment (e.g., mean or variance) they influence.

In general, from observational data alone,
a causal graph can be inferred only up to 
the Markov equivalence class (MEC),
i.e., the class of causal graphs 
that encode
identical conditional independence relations to joint distribution $\pr(\biX)$.
However, 
if the data are generated by an SCM belonging to 
a certain restricted functional class,
the causal graph can be uniquely identified.

The key research question of this paper is 
to go beyond such standard identifiability results for moment-agnostic causal graphs and to propose an SCM subclass that enables the separate identification of mean and variance causal graphs.

\section{Towards Moment-Driven Causal Discovery} \label{sec-problem}

As a novel class of SCMs, 
we propose mean-variance HNMs with mean and variance causal graphs in \Cref{subsec-problem}. 
We then discuss the identifiability conditions of these causal graphs in \Cref{subsec-identi}.

\subsection{Mean-Variance HNM with Mean and Variance Causal Graphs} \label{subsec-problem}

To accommodate possibly distinct mean- and variance-level causal relationships, 
we define a mean-variance HNM of each $X_{\nfi} \in \biX$ as 
\begin{align}
  X_{\nfi} = \mmfi + \vvfi \esfi \quad \mbox{for}\ \nfi = 1, \dots, \nf,\label{eq-MVHNM}
\end{align}
where $\pamfi, \pavfi \subseteq \biX \backslash X_{\nfi}$ are variable subsets,
$m_{\nfi}, v_{\nfi}$ are functions with $v_{\nfi}(\cdot) > 0$,
and $\esfi$ is a noise that has mean zero $\E[\esfi] = 0$, constant variance $\V[\esfi] > 0$ and satisfies $E_i \indep E_j$ for $i \neq j$.
With zero-mean noise $\esfi$,
the conditional mean and variance of $X_{\nfi}$ conditioned on $\pafi \coloneqq \pamfi \cup \pavfi$ are
\begin{align*}
  &\E[X_{\nfi} \mid \pafi] = m_{\nfi}(\pamfi), \\
  &\V[X_{\nfi} \mid \pafi] = \V[\esfi] \left( v_{\nfi}(\pavfi) \right)^2.
\end{align*}
We hence refer to $m_{\nfi}$ and $v_{\nfi}$ as \textit{mean} and \textit{variance functions}.

We define the mean and variance causal graphs based on parental variables $\pamfi$ and $\pavfi$:
Mean causal graph $\cgm$ has edge $X_{i} \rightarrow X_{\nfi}$ 
if and only if $X_{i} \in \pamfi$,
and variance causal graph $\cgv$ contains edge $X_{i} \rightarrow X_{\nfi}$ if and only if $X_{i} \in \pavfi$.
We define their adjacency matrices $\biAm, \biAv \in \{0, 1\}^{\nf \times \nf}$ 
by $A^M_{i, j} = 1$ if $X_i \rightarrow X_j$ in $\cgm$; otherwise, $A^M_{i, j} = 0$;
$\biAv$ is defined similarly.
We also introduce moment-agnostic causal graph $\cg$, 
whose adjacency matrix $\biA \in \{0, 1\}^{\nf \times \nf}$ 
is given by taking the logical OR as $A_{i, j} = A^M_{i, j} \lor A^V_{i, j}$.

\begin{remark}[\textbf{Connection to Existing SCM classes}] \label{remark_MVHNM}
A mean-variance HNM in Eq. \eqref{eq-MVHNM} is a generalization of the HNM \citep{xu2022inferring}, which expresses the variable values with a moment-agnostic causal graph:
\begin{equation}
  X_{\nfi} = \mfi + \vfi \esfi \quad \mbox{for}\ \nfi = 1, \dots, \nf. \label{eq-HNM}
\end{equation}
An HNM is an extension of additive noise models (ANMs) \citep{shimizu2006linear,hoyer2008nonlinear}, which assume homoscedasticity with constant variance function $v_{\nfi}(\cdot)$ in \eqref{eq-HNM}.
Both ANMs and HNMs assume additive noise structure and hence cannot address data distortion (e.g., log transform) 
unlike post-nonlinear models (PNLs) \citep{zhang2009identifiability}. However, unlike PNLs,
HNMs can capture heteroscedasticity,
which is pervasive in complex real-world data, such as gene expression data \citep{imoto2003bayesian}.

The mean-variance HNM has the same expressive power as the original HNM.
When the input sets are identical ($\pamfi = \pavfi = \pafi$), our mean-variance HNM reduces to the original HNM. Conversely, any instance of a mean-variance HNM can be rewritten 
as the original HNM via input masking: 
The mean and variance functions in the former 
can be reformulated as
$m_{\nfi}(\biS_{\mathrm{pa}^M(\nfi)}(\pafi))$ and $v_{\nfi}(\biS_{\mathrm{pa}^V(\nfi)}(\pafi))$,
where $\biS_{\mathcal{I}} \colon$ $\pafi$ $\rightarrow \biX_{\mathcal{I}}$ is a masking function that selects the variables indexed by $\mathcal{I} \subset \{1, \dots, \nf\}$ from parents $\pafi$ in moment-agnostic graph $\cg$.
Thus, two HNMs express the same class of functions. 
\end{remark}

Such functional equivalence between two HNMs implies that,
to identify mean and variance causal graphs $\cgm$ and $\cgv$,
we must identify not only the moment-agnostic causal graph $\cg$ of the original HNM
but also the variable selection mechanism for $\cgm$ and $\cgv$.
Based on this implication, we derive the identifiability conditions by building on the existing results for the original HNM \citep{khemakhem2021causal,yin2024effective}. 

\subsection{Identifiability Results} \label{subsec-identi}

Our results rely on four assumptions (See \Cref{sec-asmp} for details). 
The first two coincide with those for the original HNM \citep{yin2024effective}:
\begin{restatable}[Causal sufficiency]{asmp-restate}{asmpCsuff} \label{asmp-csuff}
  Exogenous noises satisfy
  $E_{i} \indep E_{\nfi}$ for any $i, \nfi \in \{1, \dots, \nf\}$.
\end{restatable}
\begin{restatable}[Causal minimality]{asmp-restate}{asmpCm} \label{asmp-cm}
  $\pr(\biX)$ satisfies the \textit{causal minimality condition} with respect to moment-agnostic causal graph $\cg$.
\end{restatable}
The last two ensure the acyclicity of moment-agnostic graph $\cg$:
\begin{restatable}{asmp-restate}{asmpDag} \label{asmp-dag}
  Mean and variance graphs $\cgm$ and $\cgv$ are directed acyclic graphs (DAGs).
\end{restatable}
\begin{restatable}[Shared permutation condition]{asmp-restate}{asmpOrder} \label{asmp-order}
  There exists an identical permutation (a.k.a., \textit{topological ordering}) of mean and variance causal graphs $\cgm$ and $\cgv$.
\end{restatable}
\setcounter{theorem}{4} 
Permutation $\pi: \{1, \dots, \nf\} \rightarrow \{1, \dots, \nf\}$ 
is a node ordering such that 
no node has a directed path to any node that precedes it in the order; that is, 
if $\pi(i) < \pi(j)$, then $X_{j}$ cannot have a directed path to $X_{i}$.
It is \textbf{not} unique because a DAG admits multiple permutations.

For example, 
Assumptions \ref{asmp-dag} and \ref{asmp-order} 
indicate that
if mean graph $\cgm$ contains $X_i \rightarrow X_j$, 
then variance graph $\cgv$ cannot have the reverse edge $X_i \leftarrow X_j$ and must either 
(\myra) also include $X_i \rightarrow X_j$ 
or (\myrb) contain no edge,
implying that the structural differences between $\cgm$ and $\cgv$ 
lie in the presence or absence of each edge.

Thus, Assumptions \ref{asmp-dag} and \ref{asmp-order} 
ensure that the \textit{union} of $\cgm$ and $\cgv$ (i.e., moment-agnostic graph $\cg$) is a DAG (see \Cref{cor-DAG}).
Since there are no identifiability results for \textit{cyclic} HNMs \citep{strobl2023identifying,yin2024effective},
this DAG condition (and hence Assumptions \ref{asmp-dag} and \ref{asmp-order}) follows a standard identifiability route in the HNM literature.

Based on these assumptions, 
we derive the \textbf{sufficient} (\textbf{but not necessary}) identifiability conditions for $\cgm$ and $\cgv$:
\begin{restatable}{theorem}{identifiability} \label{th2} 
  Under Assumptions \ref{asmp-csuff}, \ref{asmp-cm}, \ref{asmp-dag}, and \ref{asmp-order},
  mean and variance causal graphs
  $\cgm$ and $\cgv$ are identifiable
  from observational distribution $\pr(\biX)$
  if for $\nfi = 1, \dots, \nf$,
  (A) $m_{\nfi}$ is a nonlinear function, 
  (B) $v_{\nfi}$ is a piecewise function, \textbf{but not a constant function}, and 
  (C) $E_{\nfi}$ is a Gaussian noise.
  \end{restatable}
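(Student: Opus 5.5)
The proof proceeds in two stages, following the implication drawn after \Cref{remark_MVHNM}. In the first stage we identify the moment-agnostic graph $\cg$. In the second stage we recover the variable-selection mechanism, i.e., which parents of each $X_{\nfi}$ enter $m_{\nfi}$ and which enter $v_{\nfi}$.

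\textbf{Stage 1: identify $\cg$.} By \Cref{asmp-dag,asmp-order}, the union $\cg$ of $\cgm$ and $\cgv$ is a DAG (\Cref{cor-DAG}). By the input-masking argument of \Cref{remark_MVHNM}, the mean-variance HNM can be rewritten as an original HNM \eqref{eq-HNM} over $\cg$, with $\pafi=\pamfi\cup\pavfi$. Conditions (A)--(C) then place this HNM in the class covered by the existing identifiability theorem for HNMs \citep{yin2024effective}, with related results in \citep{khemakhem2021causal}. That class requires Gaussian noise, a nonlinear mean, and a piecewise non-constant variance function, and it relies on \Cref{asmp-csuff,asmp-cm}.

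Invoking that result gives that no HNM with a different DAG $\cg'$ induces the same $\pr(\biX)$, so $\cg$ is identified. The step to check carefully is that the masked functions $m_{\nfi}(\biS_{\mathrm{pa}^M(\nfi)}(\cdot))$ and $v_{\nfi}(\biS_{\mathrm{pa}^V(\nfi)}(\cdot))$ still satisfy the hypotheses of the cited theorem. Two points matter here:
\begin{itemize}
\item Masking an input preserves nonlinearity and the piecewise, non-constant form.
\item Causal minimality with respect to $\cg$ rules out a parent that enters neither function.
\end{itemize}

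\textbf{Stage 2: recover the selection mechanism.} Once $\cg$ is known, the parent set $\pafi$ of every node is known. Because $E_{\nfi}\indep \pafi$ (a consequence of causal sufficiency and acyclicity), the conditional law $\pr(X_{\nfi}\mid \pafi)$ is Gaussian with
\begin{align*}
\E[X_{\nfi}\mid \pafi]=m_{\nfi}(\pamfi), \qquad \V[X_{\nfi}\mid \pafi]=\V[\esfi]\,v_{\nfi}(\pavfi)^2 .
\end{align*}
Both functions are therefore determined by $\pr(\biX)$ as functions of $\pafi$. This holds almost everywhere on the support, and the ambiguity from scaling $\V[\esfi]$ against $v_{\nfi}$ does not affect which arguments appear.

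We then define $X_i\in\pamfi$ if and only if $\E[X_{\nfi}\mid\pafi]$ depends nontrivially on $X_i$, i.e., it is not constant in $X_i$ with the other parents held fixed. We define $\pavfi$ in the same way from the conditional variance. These are properties of $\pr(\biX)$ alone, so any two mean-variance HNMs inducing the same distribution, and satisfying the assumptions, have the same $\cgm$ and $\cgv$.

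To make this rigorous, we adopt the convention, implicit in the definition of the graphs, that each function genuinely depends on each of its arguments. Otherwise an argument could be dropped and the graph would be ill-defined; this is a minimality-type condition on $m_{\nfi}$ and $v_{\nfi}$ individually. We also need to avoid degenerate cases in which the functions are only determined on a set of measure zero. With Gaussian noise, $\pr(\biX)$ has full support whenever the roots are supported on open sets, which we can argue inductively along the topological order.

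\textbf{Main obstacle.} The hard step is Stage 1, namely the reduction to the existing HNM theorem. We must verify that its exact hypotheses, in particular the form of the piecewise variance function and the nonlinearity of the mean, are inherited under masking and under the union graph. We also must handle the case in which a parent lies in $\pavfi\setminus\pamfi$: the mean is then constant in that variable, and we must confirm that the cited bivariate and multivariate arguments do not require the mean to depend on every parent. If they do, we would instead argue per edge. For a putative reversed edge, the backward model would need Gaussian conditionals with non-constant variance. We would compare the log-densities' second derivatives and show this contradicts the piecewise non-constant $v_{\nfi}$ combined with nonlinear $m_{\nfi}$, adapting the argument of \citep{yin2024effective}.
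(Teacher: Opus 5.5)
Your two-stage plan matches the paper's. You first identify $\cg$, then split each $\pafi$ by reading off the conditional mean and variance of the Gaussian conditional $\pr(X_{\nfi}\mid\pafi)$, using that each function genuinely depends on each of its arguments. Your Stage 2 is essentially the paper's closing argument. Your a.e.\ caveat is harmless: under (C) every root node is itself Gaussian, so $\pr(\biX)$ has a positive density on all of $\R^{\nf}$, and no assumption on the supports of the roots is needed.

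\textbf{The gap is in Stage 1.} The black-box appeal to \citet{yin2024effective} fails exactly when a parent enters only one of the two functions. If $L\in\biX_{\mathrm{pa}^V(Y)}\setminus\biX_{\mathrm{pa}^M(Y)}$, then in the bivariate problem for $(L,Y)$ (after conditioning on the other variables, as in the multivariate reduction) the mean of $Y$ is constant in $L$. It is therefore not nonlinear, and Yin's hypothesis (a) does not hold. Symmetrically, if $L$ enters only the mean, the variance is constant in $L$. So your bullet claiming that masking preserves nonlinearity and the non-constant form is false in precisely the cases at issue. Note also that Yin's condition (b) is only ``piecewise''; non-constancy is this paper's strengthening. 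You flag the obstacle but leave it open. Your fallback leans on nonlinear $m_{\nfi}$, which is unavailable in the variance-only case, because there the mean carries no information about $L$.

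\textbf{The missing idea} is to go one level below Yin's theorem to \Cref{th-khemakhem} \citep{khemakhem2021causal}. That result assumes only twice differentiability, so constant means and constant variances are admissible. Under Gaussian noise, it says the bivariate direction is unidentifiable only in Scenario 1 ($m=B/H$, $v=1/H$ with $H>0$ of degree two) or in Scenario 2 (linear mean, constant variance, Gaussian marginals). Causal minimality forbids both masks from vanishing, so three cases remain.
\begin{itemize}
\item Nonlinear mean with constant variance: nonlinearity excludes Scenario 2, and constancy excludes Scenario 1, since $1/H$ is non-constant.
\item Constant mean with piecewise non-constant variance: non-constancy excludes Scenario 2, and the piecewise form excludes Scenario 1. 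This is where the non-constancy in Condition (B) is already needed for Stage 1, not only for Stage 2.
\item Both functions depending on $L$: handled as in Yin et al.
\end{itemize}
This gives the bivariate result \Cref{th-bi}. The multivariate step then follows Yin's route via Proposition 29 and Lemmas 36--37 of \citet{peters2014causal}, and no new second-derivative computation is needed.
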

\begin{proof}[Proof sketch]
Our proof in \Cref{sec-proof} takes two steps. 
First, we show that Conditions (A), (B), and (C) ensure identifiability of the moment-agnostic causal graph $\cg$ for our mean-variance HNM. 
We prove that these conditions are sufficient
to rule out the unidentifiable cases (e.g., $m_j$ is linear or $v_j$ is the inverse of a two-order polynomial),
which are established by \citet{khemakhem2021causal} for the original HNM but apply to our mean-variance HNM under Assumptions \ref{asmp-dag} and \ref{asmp-order}  (\Cref{subsec-moment-agnostic}). 
Second, for $\nfi \in \{1,\dots,\nf\}$, we prove that the parent set $\pafi$ in $\cg$ can be decomposed into subsets $\pamfi$ and $\pavfi$ under Gaussian noise $E_{\nfi}$
by showing that $\pr(\biX)$ cannot coincide under different choices of $\pamfi$ and $\pavfi$, due to the non-constancy of $m_{\nfi}$ and $v_{\nfi}$ (\Cref{subsec-mean-var-graph}).
\end{proof}
Our conditions in \Cref{th2}
are explicitly stated 
in terms of the forms of functions $m_{\nfi}$ and $v_{\nfi}$ and the distribution of noise $E_{\nfi}$,
which \textbf{can be easily enforced} via likelihood modeling (\Cref{subsec-model}) to ensure the model remains within an identifiable class.

\begin{remark}[\textbf{Role of Gaussianity}] \label{remark}
  Gaussian noise condition (C) is crucial to enable \textbf{moment-based separation of causes}. 
  Even with non-Gaussian noise $E_{\nfi}$, the causes $\pavfi$ in graph $\cgv$ (in Eq. \eqref{eq-MVHNM}) are guaranteed \textbf{not} to affect the mean of $X_{\nfi}$,
  but they may influence not only variance but also its higher-order moments (e.g., skewness, kurtosis).
  For moment-agnostic graph inference,
  the Gaussian condition is \textbf{standard},
  as existing HNM-based methods cannot handle multivariate non-Gaussian HNMs \citep{duong2023heteroscedastic,yin2024effective,kikuchi2022differentiable, immer2023identifiability,strobl2023identifying,tran2024robust}. 
  See \Cref{fig-nonGaussian} for the results on non-Gaussian synthetic datasets.
\end{remark}
\begin{remark}[Non-constancy] \label{remark}
  Compared with \citet{yin2024effective}, our non-constancy Condition (B) on variance function $v_{\nfi}$ is stronger.
  However, excluding constant functions is standard for the identifiability, as done for ANMs \citep[Proposition 17]{peters2014causal}.  
  In our setup, it is crucial to distinguish the causes based on the mean and variance. 
\end{remark}

\section{Proposed Method} \label{sec-method}

\subsection{Model Formulations} \label{subsec-model}

We develop a Bayesian approach for inferring the posterior distribution over adjacency matrices $\biAm$ and $\biAv$ of the mean and variance causal graphs, conditioned on observational data $\mathcal{D}$:
\begin{align}
  \pr(\biAm, \biAv \mid \mathcal{D}) \propto \pr(\biAm, \biAv) 
   \pr(\mathcal{D} \mid \biAm, \biAv), 
  \label{eq-posterior}
\end{align}
where $\pr(\biAm, \biAv)$ is a prior, and 
\begin{align}
   \pr(\mathcal{D} \mid \biAm, \biAv) = \int \pr(\Theta \mid \biAm, \biAv) \pr(\mathcal{D} \mid \Theta, \biAm, \biAv) \mathrm{d}\Theta
  \label{eq-integral}
\end{align} 
is a marginal likelihood obtained by integrating out the likelihood model parameters $\Theta$. 
Unfortunately, tractable computation of the integral in Eq. \eqref{eq-integral}
requires restrictive parameterizations,
such as linear Gaussian models,
which may lead to inaccurate inferences when the model assumptions are violated.

For this reason,
we approximate the posterior 
with a variational distribution (with parameters $\Phi$) 
as $\pr(\biAm, \biAv \mid \mathcal{D})$ $\approx$ $\prgs(\biAm, \biAv)$.
Below we present this DAG distribution and the likelihood.

\subsubsection{DAG Distribution Model} \label{subsubsec-dm}

We formulate a theoretically grounded model $\prgs(\biAm, \biAv)$ 
by ensuring that $\biAm$ and $\biAv$ are the DAGs with a shared permutation (i.e., Assumptions \ref{asmp-dag} and 
\ref{asmp-order}).

To guarantee that adjacency matrices $\biAm$ and $\biAv$ represent such DAGs, 
we build on a well-known result in graph theory
and decompose them using a shared permutation matrix $\biPi$:
\begin{align}
  \biAm = \biPi^{\top} \biUm \biPi,\quad
  \biAv = \biPi^{\top} \biUv \biPi, \label{eq-DAG} 
\end{align}
where  $\biUm,\biUv$  $\in \{0, 1\}^{\nf \times \nf}$ are upper-triangular matrices,
and $\biPi \in \{0, 1\}^{\nf \times \nf}$ is a permutation matrix,
in which every row and every column has exactly one element of 1 
with all others 0.
Eq. \eqref{eq-DAG} simply states that 
permuting the elements in an upper-triangular matrix leads to a DAG adjacency matrix,
e.g., $U^M_{\pi(i), \pi(j)} = A^M_{i, j}$,
where $\pi$ is a permutation that takes $\pi(i) = j$ if and only if $\Pi_{i,j} = 1$.

Using the DAG decomposition in \eqref{eq-DAG}, 
we consider a variational distribution based on the following factorization:
\begin{align}
\pr(\biAm, \biAv) = \sum_{\biUm, \biUv, \biPi} \pr(\biUm) \pr(\biUv) \pr(\biPi),  
\nonumber
\end{align}
where the summation is over all upper-triangular matrices $\biUm$ and $\biUv$ and permutation matrices $\biPi$ that are compatible with a DAG pair, $\biAm$ and $\biAv$.
Unfortunately, the exact computation of this summation is intractable,
as the number of valid permutation matrices $\biPi$ grows exponentially with the number of nodes $\nf$. 

For this reason, we take a sampling-based learning approach 
by generalizing the idea of differentiable DAG sampling (DDS) \citep{charpentier2022differentiable} to our setup 
with two DAG adjacency matrices $\biAm$ and $\biAv$.
To enable backpropagation in a backward pass, 
we approximately compute the gradient of the sampling operations 
for binary matrices $\biUm$, $\biUv$, and $\biPi$ in \eqref{eq-DAG} 
by leveraging their continuous relaxation $\gbiUm, \gbiUv \in [0, 1]^{\nf \times \nf}$ and $\gbiPi \in  \R^{\nf \times \nf}$.
To obtain such continuous matrices $\gbiUm$ and $\gbiUv$, 
we employ the \textit{Gumbel-Softmax trick} \citep{jang2017categorical}, 
which allows differentiable approximation 
using i.i.d. standard Gumbel noises $g_0^M, g_0^V, g_1^M, g_1^V \sim \mathrm{Gumbel}(0)$:
\begin{align*}
  &\gUm_{i, j} = \frac{\mathrm{e}^{(\mathrm{log} \phi^M_{i, j} + g_1^M ) / \tau^M}}{\mathrm{e}^{(\mathrm{log} \phi^M_{i, j} + g_0^M ) / \tau^M} + \mathrm{e}^{(\mathrm{log} (1 - \phi^M_{i, j}) + g_1^M ) / \tau^M}},\\ 
  &\gUv_{i, j} = \frac{\mathrm{e}^{(\mathrm{log} \phi^V_{i, j} + g_1^V ) / \tau^V}}{\mathrm{e}^{(\mathrm{log} \phi^V_{i, j} + g_0^V ) / \tau^V} + \mathrm{e}^{(\mathrm{log} (1 - \phi^V_{i, j}) + g_1^V ) / \tau^V}}, 
\end{align*}
where 
$\phi^M_{i, j}, \phi^V_{i, j} \in [0, 1]$ denote the Bernoulli distribution parameters, 
and $\tau^M, \tau^V \geq 0 $ are the hyperparameters that control the smoothness of the distribution.
To compute $\gbiPi$, 
we directly adopt the DDS's sampling scheme,
which approximates the sorting order of perturbed log probabilities: 
\begin{align} 
  \gbiPi = \mathrm{SoftSort}(\mathrm{log} \bipsi + \bigumb),\label{eq-gbiPi}
\end{align}
where $\bipsi \in \R^\nf$ denotes the categorical distribution parameters,
$\bigumb \in \R^\nf$ is a vector of i.i.d. standard Gumbel noise,
and $\mathrm{SoftSort}$ is a differentiable approximation to a sorting operation, 
which returns a continuous relaxation of permutation matrix representing the sorted order of the input elements.  
We detail the formulation of $\mathrm{SoftSort}$ function in \Cref{subsec-dperm}. 

In a forward pass, we sample binary matrices $\biUm$, $\biUv$, $\biPi$ $\in \{0, 1\}^{\nf \times \nf}$
by applying the argmax operator on $\gbiUm$ and $\gbiUv$
(e.g.,  $U^M_{i, j}$ $=$ $\arg \max [1 - \gUm_{i, j}, \gUm_{i, j}]$)
and a row-wise argmax operation on $\gbiPi \in \R^{\nf \times \nf}$.
By plugging the sample of $\biUm$, $\biUv$, and $\biPi$ 
into Eq. \eqref{eq-DAG},
we sample DAG adjacency matrices $\biAm, \biAv \sim \pr_{\Phi}(\biAm, \biAv)$,
where $\Phi = \{\biphim, \biphiv, \bipsi\}$ is a set of parameters. 

\subsubsection{Likelihood Model} \label{subsubsec-ll}

Using DAG sample $\biAm, \biAv \sim \pr_{\Phi}(\biAm, \biAv)$,
we model likelihood
$\pr_{\Theta}(\mathcal{D} | \biAm, \biAv)$
based on mean-variance HNMs.

For each $\nfi = 1, \dots, \nf$,
we parameterize mean and variance functions $m_{\nfi}$ and $v_{\nfi}$ 
using multi-layer perceptrons (MLPs) 
with parameters $\bithetamj, \bithetavj \in \Theta$.
To represent the (possibly different) inputs of these MLPs,
we apply masking to $\biX$ using the $\nfi$-th column vectors in $\biAm$ and $\biAv$, 
denoted by $\biAmj, \biAvj \in \{0, 1\}^{\nf}$,
leading to the following structural equation parameterization:
\begin{align} 
  X_{\nfi} = m_{\nfi}(\biAmj \odot \biX; \bithetamj) + v_{\nfi}(\biAvj \odot \biX; \bithetavj) E_{\nfi}\ \mbox{for}\ \nfi = 1, \dots, \nf,  \label{eq-MLP}
\end{align}
where $\odot$ denotes the Hadamard product.
By assuming, without loss of generality, that 
noise $E_{\nfi}$ follows a standard Gaussian distribution $\mathcal{N}(0, 1)$,
we parameterize the conditional distribution of $X_{\nfi}$ in \eqref{eq-MLP} as multivariate Gaussian $\mathcal{N}(m_{\nfi}(\biAmj \odot \biX; \bithetamj), 
( v_{\nfi}(\biAvj \odot \biX; \bithetavj) )^2 )$.

To satisfy Conditions (A) and (B) in \Cref{th2},
we formulate all MLPs in our experiments
using leaky rectified linear unit (ReLU) activations, 
which are piecewise nonlinear and cannot be a constant.
To ensure that the variance function value is strictly positive 
(i.e., $v_{\nfi}(\cdot) > 0$),
we model $\mathrm{log}\ v_{\nfi}(\cdot)$ using an MLP
and apply the exponential function to it as a final transformation.

\subsection{Model Parameter Learning Overview} \label{subsec-learning}

To obtain approximated posterior 
$\prgs(\biAm, \biAv) \simeq \pr(\biAm, \biAv \mid \mathcal{D})$,
we learn parameters $\Phi = \{\biphim, \biphiv, \bipsi\}$, and $\Theta = \{(\bithetamj, \bithetavj)_{\nfi=1}^{\nf}\}$,
such that 
the Kullback Leibler (KL) divergence between variational distribution $\prgs(\biAm, \biAv)$ and true posterior $\pr(\biAm, \biAv \mid \mathcal{D})$ is minimized. 
Since minimizing this KL divergence is equivalent to maximizing the evidence lower bound (ELBO), we aim to maximize the following ELBO-based objective function:
\begin{align} 
	\underset{\Phi,\Theta}{\text{max}}
 \quad &\E_{\biAm, \biAv \sim \prgs}\left[\mathrm{log}\ \pr_{\Theta}(\mathcal{D} \mid \biAm, \biAv)\right] - \lambda\Omega_{\Phi, \Theta},
  \label{eq-plogl}
\end{align}
where $\mathrm{log} \pr_{\Theta}(\mathcal{D}| \biAm, \biAv)$ is a log likelihood, $\lambda \geq 0$ is a parameter for the following sparsity-inducing regularizer:
\begin{align}
  \begin{aligned}
  \Omega_{\Phi, \Theta} &= \lambda_{\Phi} \mathrm{KL}\left(\prgs(\biUm, \biUv) \mid\mid \pr(\biUm, \biUv)\right) \\
  &+ \lambda_{\Theta^M} \sum_{\nfi=1}^{\nf} \|\bithetamj\|_2^2 + \lambda_{\Theta^V} \sum_{\nfi=1}^{\nf} \|\bithetavj\|_2^2, 
  \end{aligned}
  \label{eq-reg}
\end{align}
where $\lambda_{\Phi}, \lambda_{\Theta^M}, \lambda_{\Theta^V} \geq 0$ are hyperparameters. 
Following DDS \citep{charpentier2022differentiable},
we do not put any prior on permutation matrix $\biPi$ in Eq. \eqref{eq-reg}
to retain tractable computation.  
We encourage edge sparsity 
by setting prior probabilities $\pr(\biUm, \biUv)$ to small values.

To solve the optimization problem in \eqref{eq-plogl},
we repeat three steps.
First, we sample DAG adjacency matrices $\biAm, \biAv \sim \prgs(\biAm, \biAv)$.
Then we use this single DAG sample to approximate the expectation in Eq. \eqref{eq-plogl} 
as the Gaussian log likelihood $\mathrm{log} \pr_{\Theta}(\mathcal{D}| \biAm, \biAv)$ for observational dataset $\mathcal{D} = \{\bix_1, \dots, \bix_n\}$:
\begin{align}
  -\frac{1}{n} \sum_{i=1}^n \sum_{\nfi = 1}^{\nf} \bigg(  
    \frac{(x_{i, j} - m_{\nfi}(\bix_{i, \mathrm{pa}^M(j)}; \bithetamj))^2}{2 (v_{\nfi}(\bix_{i, \mathrm{pa}^V(j)}; \bithetavj))^2} +
    \mathrm{log} v_{\nfi}(\bix_{i, \mathrm{pa}^V(j)}; \bithetavj)
    \bigg) \nonumber
  \end{align}
where $\bix_{i, \mathrm{pa}^M(j)} \coloneqq \biAmj \odot \bix_i$ and $\bix_{i, \mathrm{pa}^V(j)} \coloneqq \biAvj \odot \bix_i$ are the masked input vectors obtained from observation $\bix_i$. 
Finally, we perform a gradient-based update for $\Phi$ and $\Theta$ based on Eq. \eqref{eq-plogl}.

\subsection{Challenges in Complex Heteroscedasticity and Two-Graph Inference} \label{subsec-difficulty}

\begin{algorithm}[t]
\caption{Parameter Learning for Mean and Variance Graphs}
\label{alg:mvhnm}
\begin{algorithmic}[1]
\Require Observational dataset $\mathcal{D}=\{\bix_1, \dots, \bix_n\}$; parameters $\Phi,\Theta$; node-ordering constraints $\mathcal O$
\State Initialize all $\Phi = \{\biphim, \biphiv, \bipsi\}$ and $\Theta = \{(\bithetamj, \bithetavj)_{\nfi=1}^{\nf}\}$
\While{Stopping criterion not met}
  \While{Updates for $\Theta \setminus \{(\bithetavj)_{j=1}^{\nf}\}$ and $\Phi \setminus \{\biphiv\}$  not converged}
  \State Sample $\biAm,\biAv \sim \pr_{\Phi}(\biAm, \biAv)$
  \State Compute ELBO-based objective function in Eq. \eqref{eq-plogl}
  \State Update $\Theta \setminus \{(\bithetavj)_{j=1}^{\nf}\}$ and $\Phi \setminus \{\biphiv\}$ using scaled gradient
  \If{node-ordering constraints $\mathcal{O}$ are provided}
     \State Project updated $\bipsi$ onto the feasible set by Eq. \eqref{eq-priork}
  \EndIf
  \EndWhile
  \While{Updates for variance-related parameters $\{(\bithetavj)_{j=1}^{\nf}\}$ and $\{\biphiv\}$ not converged}
  \State Sample $\biAm,\biAv \sim \pr_{\Phi}(\biAm, \biAv)$
  \State Compute ELBO-based objective function in Eq. \eqref{eq-plogl}
  \State Update $\{(\bithetavj)_{j=1}^{\nf}\}$ and $\{\biphiv\}$ using standard gradient
  \EndWhile
\EndWhile
\State \Return Learned parameters $\Phi$ and $\Theta$
\end{algorithmic}
\end{algorithm}

The optimization of the objective in Eq. \eqref{eq-plogl} faces two challenges.

\textbf{1. Difficulty in Maximum Likelihood Estimation (MLE):} 
Gradient-based updates for likelihood parameters $\Theta$ has two issues. 
First, maximizing the likelihood with respect to $\Theta$
is \textit{ill-posed}
in the sense that it can be arbitrarily increased by inflating the variance function value 
rather than by reducing the residual errors \citep{seitzerpitfalls}.
Second,
gradient-based optimization slows around the data points with high true variance
because the gradients with respect to $m_{\nfi}$ and $v_{\nfi}$ scale inversely with variance function $v_{\nfi}(\bix_{i, \mathrm{pa}^V(j)}; \bithetavj)$ \citep{skafte2019reliable}.

We overcome these two issues by building on recent advances in heteroscedastic noise regression \citep{stirn2023faithful,wong2024understanding}.
To address the ill-posedness of MLE,
we impose $l_2$ regularization on $\Theta$, as shown in Eq. \eqref{eq-reg}.
To resolve the optimization inefficiency,
we alternately take two steps.
First, we update all the parameters except for the variance-related ones,  
i.e., $\Theta \setminus \{(\bithetavj)_{j=1}^{\nf}\}$ and $\Phi \setminus \biphiv$,
using the gradient of mean squared error (MSE), $\frac{1}{n} \sum_{i,j} (x_{i, j} - m_{\nfi}(\bix_{i, \mathrm{pa}^M(j)}; \bithetamj))^2$,
which is equivalent to scaling the standard gradient by $( v_{\nfi}(\bix_{i, \mathrm{pa}^V(j)}; \bithetavj) )^2$. This scaling approximates
a computationally demanding second-order Newton step with an inverse Jacobian,
thus enabling an efficient, curvature-aware optimization.
Second, we update variance-related parameters $\{(\bithetavj)_{j=1}^{\nf}\}$ and $\biphiv$ using the standard gradient.
\Cref{alg:mvhnm} summarizes the overall procedure.
The total time complexity is dominated by likelihood evaluation and is identical to the existing likelihood-based methods \citep{yin2024effective}.

\textbf{2. Large Parameter Search Space:} 
Fitting our DAG distribution model parameter $\Phi$ for the two causal DAGs is challenging 
because it is inevitably more high-dimensional than the existing models for a single (moment-agnostic) causal DAG.
Although we effectively reduce the number of parameters
by sharing permutation distribution parameter $\bipsi \subset \Phi$ (\Cref{subsubsec-dm}),
its estimation might remain difficult under small sample-size scenarios.
To tackle this challenge, below 
we present an approach that 
reduces the search space of $\bipsi$ 
by incorporating prior knowledge about the node orderings.

\subsection{Prior Knowledge Incorporation} \label{subsec-priork}

Incorporating domain knowledge about the ground-truth graph has been a key strategy for addressing small-sample setups \citep{niinimaki2016structure,oyen2017order,li2018bayesian,hasan2022kcrl}. 
A common type of such knowledge is the absence or presence of specific edges.
Edge absence knowledge can be easily incorporated by reducing the prior probabilities $\pr(\biUm, \biUv)$ in Eq. \eqref{eq-reg} for the corresponding edges.
By contrast, edge presence knowledge is more delicate in our setting because forcing an edge to appear in the sampled graphs $\biAm$ and $\biAv$ must preserve the DAG constraints.
This naturally leads to pairwise node-ordering constraints, which specify that one variable should precede another.
Moreover, such ordering knowledge is often available in practice \citep{ban2024differentiable};
for example, in gene regulatory network inference, we may know that $X_i$ is an upstream regulator of gene $X_{\nfi}$.
For these two reasons, below we focus on incorporating pairwise node-ordering knowledge.

Suppose that we have access to a set of pairwise node-ordering constraints $\mathcal{O} = \{\pi(i) < \pi(\nfi) | (i, \nfi) \in \mathcal{S}\}$ 
for variable index pairs $\mathcal{S} \subseteq \{(i, \nfi) | i, \nfi \in \{1, \dots, \nf\}\}$,
where $\pi(i) < \pi(\nfi)$ is a node-ordering constraint that variable $X_{i}$ precedes $X_{\nfi}$ in permutation $\pi$.

\begin{figure*}[t]
  \includegraphics[height=4.7cm]{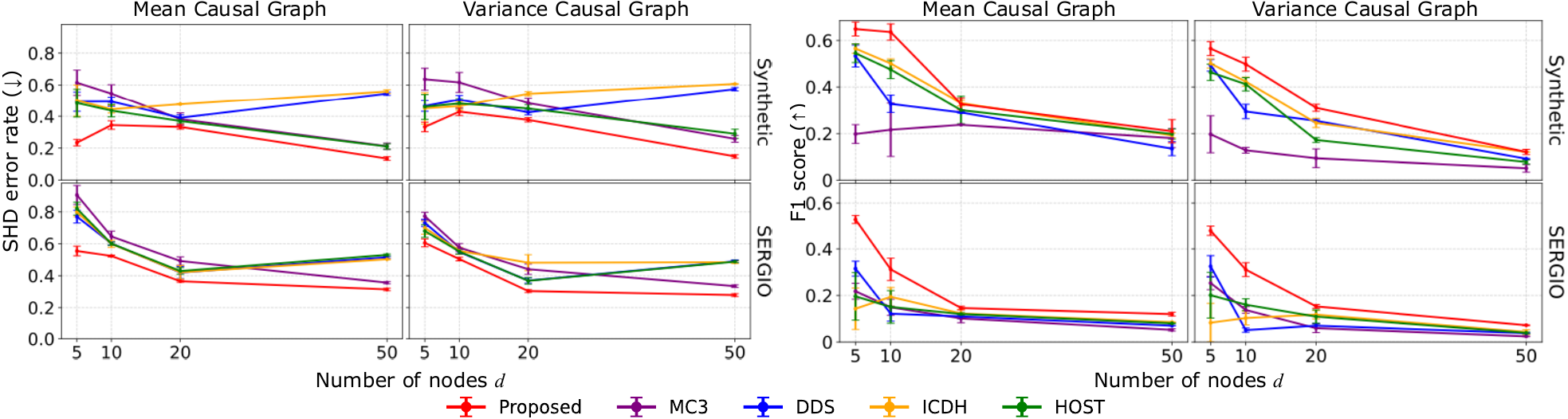}
\centering 
\caption{Mean and variance causal graph inference performance on synthetic and SERGIO datasets with sample size $n=500$. Achieving \textbf{both} lower SHD rate (left) and higher F1 score (right) is better.} 
\label{fig-diffHNMNN}
\end{figure*}

\begin{figure}[t]
  \includegraphics[height=9.2cm]{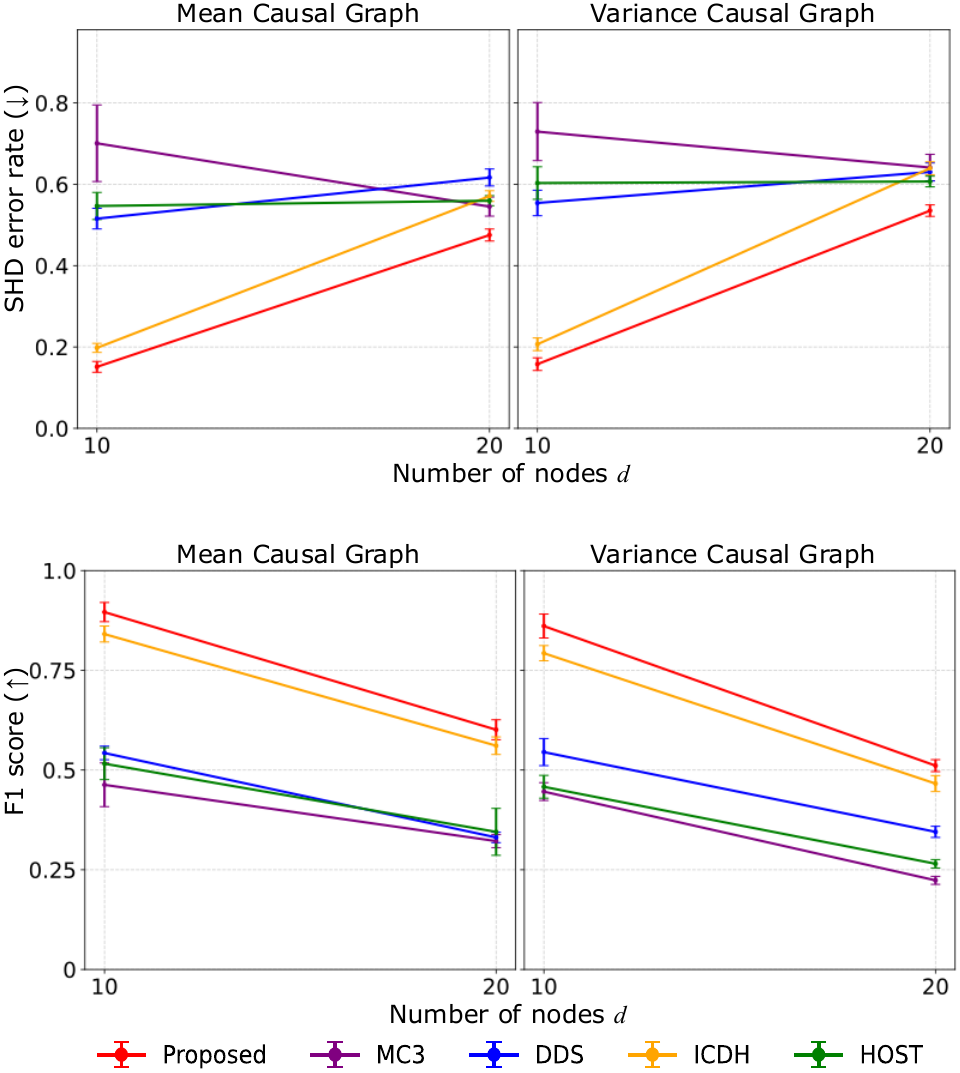}
\centering 
\caption{Mean and variance causal graph inference performance on synthetic datasets ($n=500$) under dense graphs} 
\label{fig-dense}
\end{figure}

Then we can easily incorporate such node-ordering constraints, 
thanks to the sorting-based sampling scheme in \eqref{eq-gbiPi}.
Since this scheme samples the continuous relaxation of the permutation matrix 
by computing a sorting order for parameter vector $\bipsi \in \R^{\nf}$,
we can impose the node-ordering constraints in $\mathcal{O}$ 
by forcing the parameter values 
to satisfy the following inequality constraint set:
\begin{align}
  \mathcal{I}(\bipsi) = \{\psi_i + c_{i, \nfi} \leq \psi_{\nfi} \mid  (i, \nfi) \in \mathcal{S}\}, \label{eq-ineq-constr}
\end{align}
where $c_{i, \nfi} > 0$ is a hyperparameter that ensures $\psi_i < \psi_{\nfi}$.
To satisfy these constraints, 
we project the updated parameter values $\bipsi'$ 
onto the feasible set
by solving the constrained least squares:
\begin{align}
  \bipsi_{\mathrm{new}} = \underset{\birho \in \mathcal{I}(\birho)}{\text{arg min}} \| \birho - \bipsi'\|^2_2, \label{eq-priork}
\end{align}
which can be efficiently solved with a quadratic programming (QP) solver.
The constraints in \eqref{eq-priork} are
soft constraints,
since they might not hold in sampled permutations
due to the perturbation by Gumbel noise in Eq. \eqref{eq-gbiPi}.
Hyperparameter $c_{i, \nfi}$ strikes a trade-off between the robustness to this noise and possible parameter values;
we set $c_{i, \nfi} = 1.5$ for all $i, \nfi \in \{1, \dots, \nf\}$ in our experiments.

\section{Experiments} \label{sec-exp}

\textbf{Baselines:} We compare our method with four baselines.
(\myra) Two Bayesian methods based on ANM likelihoods:  
Metropolis-coupled Markov chain Monte Carlo (\textbf{MC3}) \citep{giudici2003improving}, a traditional posterior-sampling approach using linear-Gaussian models; and  
a variational inference method, \textbf{DDS} \citep{charpentier2022differentiable}. 
(\myrb) Two point-estimation methods based on HNM likelihoods, identifiable causal discovery under heteroscedastic data (\textbf{ICDH}) \citep{yin2024effective}
and heteroscedastic causal structure learning (\textbf{HOST}) \citep{duong2023heteroscedastic}.
Following \citet{yin2024effective},
we exclude VarSort \citep{reisach2021beware}, 
as its performance is highly unstable under heteroscedastic noise, due to the reliance on marginal variances
(see \Cref{subsec-varsort}).

To make a fair comparison, 
we test \textbf{both} moment-specific and moment-agnostic graph inference performance of all methods, 
mirroring the spirit of \citet{yin2024effective}, 
who evaluated their HNM-based method \textbf{ICDH} 
on both ANM and HNM datasets.

\subsection{Simulated Data Experiments} \label{subsec-synth}

We evaluate our method on synthetic and semi-synthetic datasets.

\subsubsection{Mean and Variance Graph Inference} \label{subsubsec-synth-mv}

\textbf{Data and Causal Graphs:}
We sample synthetic datasets from mean-variance HNMs,  
where both the mean and variance functions are modeled by MLPs with randomly initialized parameters.
We randomly draw ground-truth mean and variance causal graphs from the Erd\H{o}s--R\'enyi (ER) models,
where the expected number of edges per node is set to 1 or 2 (sparse graphs; 1 for cases with $\nf = 5$) or 4 (dense graphs).
We also use semi-synthetic datasets generated from SERGIO \citep{dibaeinia2020sergio}, 
a gene expression simulator that produces \textit{realistic} 
single-cell transcriptomic data using the parameters tuned to real data.
Their ground truth causal graphs are sampled from the scale-free (SF) models.

\begin{table*}[t]
\centering
\caption{Total variation (TV) distance and KL divergence (KL) of posteriors inferred from linear Gaussian ANM datasets ($n=500$). Achieving both lower TV and KL values is better.}

\scalebox{1.1}{
  \begin{sc}  
\begin{tabular}{lcc cc cc}
\toprule
\multirow{2}{*}{} &
\multicolumn{2}{c}{$d=2$} &
\multicolumn{2}{c}{$d=3$} &
\multicolumn{2}{c}{$d=4$} \\
\cmidrule(lr){2-3}\cmidrule(lr){4-5}\cmidrule(lr){6-7}
& TV & KL & TV & KL & TV & KL \\
\midrule
MC3      & $\mathbf{0.01 \pm 0.00}$ & $\mathbf{0.01 \pm 0.00}$ & $\mathbf{0.10 \pm 0.01}$ & $\mathbf{0.09 \pm 0.01}$ & $0.42 \pm 0.10$ & $6.52 \pm 0.91$ \\
DDS      & $0.31 \pm 0.10$ & $1.94 \pm 0.28$ & $0.32 \pm 0.08$ & $1.81 \pm 0.21$ & $0.45 \pm 0.11$ & $5.52 \pm 0.73$ \\
Proposed & $0.26 \pm 0.06$ & $1.01 \pm 0.46$ & $0.21 \pm 0.09$ & $0.34 \pm 0.09$ & $\mathbf{0.34 \pm 0.08}$ & $\mathbf{1.51 \pm 0.28}$ \\
\bottomrule
\end{tabular}
\end{sc}
}
\label{table-posterior}
\end{table*}

\textbf{Evaluation Metrics:}
We use the structural Hamming distance (SHD) and F1 score (\Cref{subsec-metrics}). 
We report the mean and standard deviation (SD) 
over 20 randomly generated datasets.

\textbf{Performance under Sparse Graphs:} 
\Cref{fig-diffHNMNN} presents the SHD rate (i.e., the SHD divided by its maximum possible value, $\mathrm{SHD}/\binom{d}{2}$) and the F1 score.
Our method outperforms all baselines on both synthetic and semi-synthetic datasets, thereby demonstrating its effectiveness for moment-driven causal discovery.

On synthetic datasets (top row of \Cref{fig-diffHNMNN}), \textbf{MC3} and \textbf{DDS} yield poor F1 scores because their ANM-based likelihoods are misspecified under heteroscedasticity. \textbf{ICDH} and \textbf{HOST} perform worse at inferring the variance graph, suggesting that their HNM-based inference 
is driven by the underlying mean-graph structure and therefore overlooks the drivers of heteroscedasticity. These findings highlight the importance of moment-driven causal discovery approaches for understanding complex real-world phenomena.

On semi-synthetic datasets (bottom row of \Cref{fig-diffHNMNN}), which reflect the complex nonlinearity and heteroscedasticity of gene expression data, our method achieves substantial gains over the baselines—especially in the F1 score—underscoring its practical reliability and versatility. Although our method also performs best at $\nf = 50$, accurate inference in such high-dimensional settings remains challenging due to the accumulation of heteroscedastic noise. Incorporating more advanced heteroscedastic noise regression techniques to tackle this challenge is left as our future work.

\textbf{Performance under Dense Graphs:} We also confirm the superior performance of our method under dense mean and variance graphs. 
\Cref{fig-dense} presents the results on synthetic datasets with dense graphs (ER model with 4 expected edges per node). Despite the increased complexity of the underlying graphs, our method consistently outperforms all baselines in both SHD rate and F1 score, demonstrating its performance robustness.

\subsubsection{Moment-Agnostic Graph Inference} \label{subsubsec-synth-mo}
To ensure a fair comparison with baselines,
we test the moment-agnostic causal graph inference performance of our method.

\begin{figure}[t]
  \includegraphics[height=6cm]{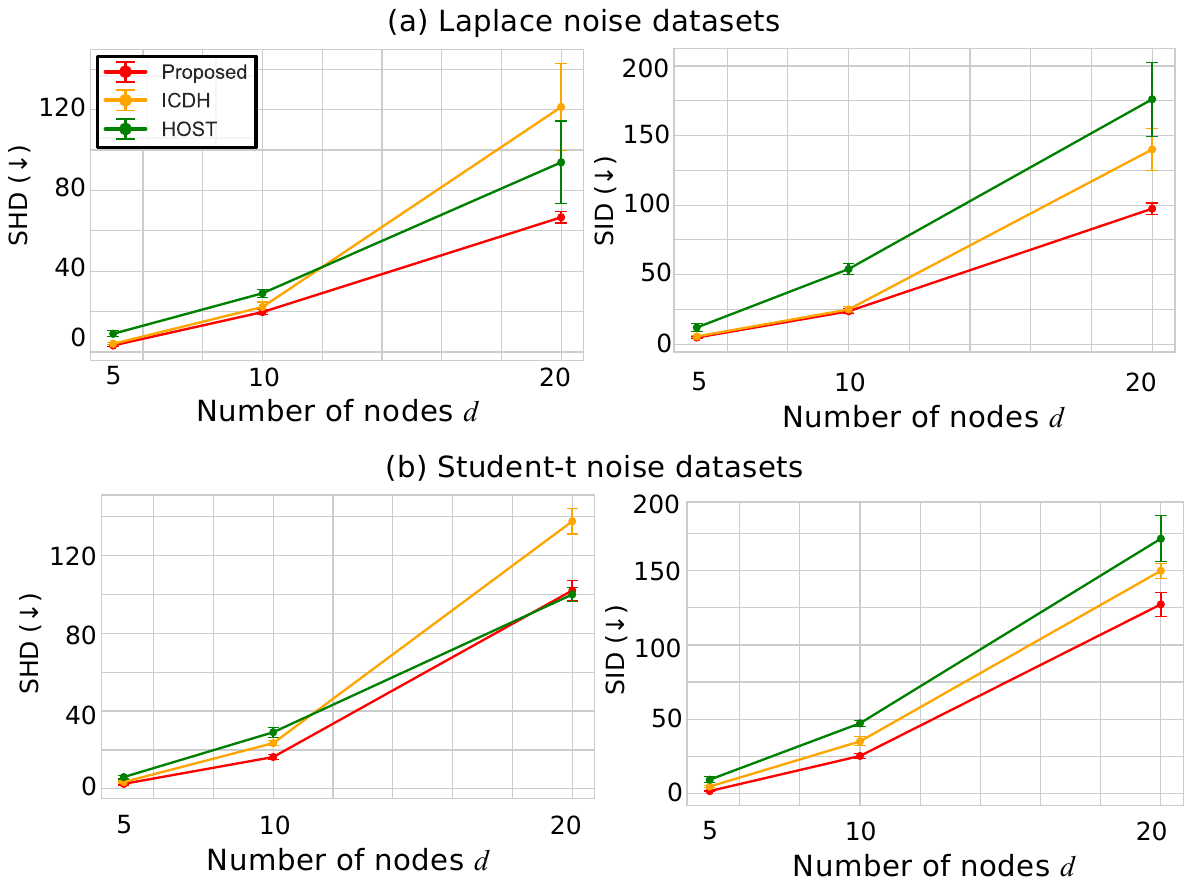}
  \centering 
  \caption{Moment-agnostic causal graph inference performance on non-Gaussian synthetic datasets ($n=500$) with (a) Laplace noise and (b) student-t noise. 
  Achieving \textbf{both} lower SHD (left) and lower SID (right) is better.} 
  \label{fig-nonGaussian}
\end{figure}

\textbf{Data:} We use synthetic datasets sampled from linear Gaussian ANMs and nonlinear Gaussian HNMs.
For linear Gaussian ANMs, we randomly draw both linear coefficients and the causal graph structures from uniform distributions.
Regarding nonlinear HNMs, we use the same data generation process as in \Cref{subsubsec-synth-mv} by sampling sparse moment-agnostic causal graphs from ER models.

\textbf{Posterior Approximation Quality:} We first assess the quality of the posterior approximation, using linear Gaussian ANMs, where the ground-truth posterior can be computed exactly for small $\nf$ \citep{annadani2023bayesdag}.

\Cref{table-posterior} presents the total variation (TV) distance and KL divergence (KL) between the inferred and true posteriors over moment-agnostic causal graphs. 
Our method consistently outperforms \textbf{DDS}, despite the increased model complexity for inferring two separate graphs. 
Since \textbf{MC3} is specifically designed for linear Gaussian ANMs,
 it achieves the best posterior approximation performance when $\nf=2, 3$. However, due to its poor scalability, 
 it becomes worse when $\nf = 4$. 
These results demonstrate the effectiveness of our variational inference framework for posterior approximation. 

\textbf{Robustness to Non-Gaussian Noise:} We next evaluate the performance on non-Gaussian HNM datasets, using SHD and structural intervention distance (SID) \citep{peters2015structural}. Due to the absence of baselines for multi-variate non-Gaussian HNMs, we consider \textbf{ICDH} and \textbf{HOST}, both of which are designed for Gaussian HNMs.\footnote{To the best of our knowledge, the official implementation of SkewScore \citep{li2024local}---the only method designed for multivariate non-Gaussian HNMs---is not publicly available.} 

\Cref{fig-nonGaussian} presents the results on the Laplace and student-t datasets.
Despite the likelihood model misspecification, 
our method consistently outperforms the baselines in terms of both SHD and SID scores, 
suggesting strong empirical robustness to noise misspecification.
A possible reason is that our method may mitigate the impact of noise distribution misspecification by downweighting the gradients from extreme observed noises, owing to our parameter learning strategy presented in \Cref{subsec-difficulty}.

\textbf{Additional Experiments for Investigating Model Robustness:} Finally, we test the performance when the data are generated from different identifiable SCM classes, namely, nonlinear Gaussian ANMs and HNMs. On these homoscedastic and heteroscedastic datasets, our method again achieves comparable performance to the ANM-based and HNM-based baselines, demonstrating its robustness in moment-agnostic graph inference. See \Cref{subsec-ANM-HNM} for the detailed results and experimental settings.

\begin{table*}[t]
  \centering
  \caption{Moment-agnostic graph inference performance on Sachs dataset. $\downarrow$ and $\uparrow$ denote "lower is better" and "higher is better".}
  \scalebox{1.1}{
  \begin{sc}  
  \begin{tabular}{lcccccc}
    \toprule
    & \multicolumn{2}{c}{$n=100$} & \multicolumn{2}{c}{$n=200$} & \multicolumn{2}{c}{$n=853$}  \\
    & SHD ($\downarrow$) & F1 ($\uparrow$) & SHD ($\downarrow$) & F1 ($\uparrow$) & SHD ($\downarrow$) & F1 ($\uparrow$) \\
           \midrule
  MC3      & 22.6 $\pm$ 1.1     & 0.19 $\pm$ 0.04  & 20.3 $\pm$ 1.3     & 0.20 $\pm$ 0.07  & 18.9 $\pm$ 0.7     & 0.14 $\pm$ 0.05  \\
  DDS     & 17.6 $\pm$ 0.7     & 0.20 $\pm$ 0.03   & 16.6 $\pm$ 0.6     & 0.20 $\pm$ 0.02   & 15.0 $\pm$ 0.7     & 0.28 $\pm$ 0.02   \\
  ICDH     & 20.0 $\pm$ 1.0 & \textbf{0.21 $\pm$ 0.01}  & 17.5 $\pm$ 0.5 & 0.22 $\pm$ 0.01  & 14.5 $\pm$ 0.5 & 0.27 $\pm$ 0.03  \\
  HOST     & 16.1 $\pm$ 0.5     & 0.19 $\pm$ 0.01   & 15.0 $\pm$ 0.3     & \textbf{0.33 $\pm$ 0.02}   & \textbf{13.5 $\pm$ 0.8}     & \textbf{0.38 $\pm$ 0.03}   \\
  Proposed & \textbf{16.0 $\pm$ 0.3}     & 0.20 $\pm$ 0.02   & \textbf{14.9 $\pm$ 0.4}     & 0.31 $\pm$ 0.02   & 13.7 $\pm$ 0.5     & 0.36 $\pm$ 0.02   \\
  \midrule
  Proposed $+ 25\%$ & 14.9 $\pm$ 0.4     & 0.34 $\pm$ 0.03   & 14.8 $\pm$ 0.5     & 0.35 $\pm$ 0.02   & 13.4 $\pm$ 0.4     & 0.37 $\pm$ 0.02   \\
  Proposed $+ 50\%$ & \textbf{13.2 $\pm$ 0.5}     & \textbf{0.36 $\pm$ 0.02}   & \textbf{13.2 $\pm$ 0.3}     & \textbf{0.36 $\pm$ 0.02}   & \textbf{13.1 $\pm$ 0.2}     & \textbf{0.45 $\pm$ 0.03}   \\
  \bottomrule 
  \end{tabular}
\end{sc}
  }
  \label{table-sachs}
  \end{table*}

\begin{table}[t]
  \centering
  \caption{Run time on Sachs dataset ($n= 853$). Our \textsc{Proposed} needs about twice the run time of \textsc{DDS} due to two-graph inference and can use prior knowledge with small overhead.}
  \scalebox{1.1}{  
  \begin{sc}  
  \begin{tabular}{lc}
    \toprule
    & Run time [sec] \\
           \midrule
  DDS     & 367 $\pm$ 11   \\
  Proposed & 733 $\pm$ 19     \\
  \midrule
  Proposed $+ 25\%$ & 852 $\pm$ 38     \\
  Proposed $+ 50\%$ & 1260 $\pm$ 51     \\
  \bottomrule 
  \end{tabular}
\end{sc}
  }
  \label{table-runtime}
  \end{table}

\begin{table}[t]
  \centering
  \caption{Estimated posterior probability of MEK $\rightarrow$ ERK in inferred variance causal graph structure}
  \scalebox{1.05}{  
  \begin{sc}    
\begin{tabular}{llll}
\toprule
 & $n = 100$ & $n = 200$ &  $n = 853$ \\
\midrule
Proposed      & $0.585 \pm 0.009$ & $0.592 \pm 0.008$ & $0.620 \pm 0.019$ \\
\bottomrule
\end{tabular}
\end{sc}
  }
  \label{table-real-case-study}  
\end{table}

\subsection{Real-World Data Experiments} \label{subsec-real}

\textbf{Data:} We use a well-established benchmark dataset,
called the Sachs dataset \citep{sachs2005causal},
which contains $853$ observations of the expression levels of $\nf=11$ proteins in human cells. 
We test each method under $n=100$, $200$, and $853$ observations with $20$ random restarts.

\subsubsection{Moment-Agnostic Graph Inference}
Because reliable ground truth for the mean and variance graph structures is \textbf{unavailable}, for a rigorous comparison we evaluate graph-level inference performance only on moment-agnostic graphs.

\Cref{table-sachs} presents the results.
Although our method learns additional parameters for inferring two separate  graphs,
it achieves comparable performance to the state-of-the-art \textbf{HOST} method.

When it leverages randomly selected $25\%$ and $50\%$ of pairwise node orderings,  
the performance improves with an increase in their number, especially for small sample-size setups ($n=100$ and $200$), underscoring its practical utility under data scarcity.

Notably, our run time comparison results presented in \Cref{table-runtime} show that 
such performance gain of our prior knowledge incorporation approach comes with only a modest increase in run time, 
demonstrating its computational efficiency.
This computational overhead could be further reduced by employing more efficient QP solvers for solving the constrained least squares in Eq.  \eqref{eq-priork}.

\subsubsection{Case Study: Variance Edge Detection} \label{subsubsec-casestudy}
Using the Sachs dataset, we conduct a case study to assess our method's ability to identify the origin of heteroscedasticity in protein signaling network. This heteroscedasticity is well documented in the literature \citep{jacob2002convergence,filippi2016robustness,davies2020systems} and is statistically significant in the Sachs dataset ($p < 0.0096$ when performing the White test with random forest regression).

Guided by prior biological evidence \citep{filippi2016robustness}, we hypothesize that the MEK $\rightarrow$ ERK edge in the ground-truth protein regulatory network represents a causal relationship at the variance level, and we assess whether our method can identify this edge.

\Cref{table-real-case-study} presents the mean and standard deviation of the estimated posterior probability of the MEK $\rightarrow$ ERK edge across 20 runs (with different random seeds). Consistent with biological evidence, the estimated posterior probabilities remain consistently high even in low-sample regimes ($n=100$ and $200$), implying that our Bayesian framework successfully predicts the edge presence with high certainty by detecting the underlying heteroscedasticity.
These results 
underscore the effectiveness of our Bayesian, moment-driven approach 
in discovering the drivers of heteroscedasticity
under data scarcity settings in practical real-world applications.

\section{Related Work} \label{sec-related}

\subsection{HNM-based Causal Discovery}

HNMs are of growing interest as a flexible model that addresses heteroscedasticity in complex real-world data. 
Early work has shown bivariate identifiability 
\citep{khemakhem2021causal,xu2022inferring},
which is extended to multivariate settings \citep{strobl2023identifying,yin2024effective}.
Building on their identifiability results,
likelihood-based methods \citep{immer2023identifiability,kikuchi2022differentiable,tran2024robust,yin2024effective} and independence-test-based methods \citep{duong2023heteroscedastic,immer2023identifiability,lin2024skewness} have been proposed
for the point estimation of the causal graph.
Our work differs in two key aspects.
(\myra) Our inference targets are mean and variance causal graphs,
whose definition and identifiability are presented in \Cref{sec-problem}.
(\myrb) Unlike existing point estimation methods, 
we develop a Bayesian approach that infers 
the posterior distribution over causal graphs (\Cref{sec-method}), 
 which enables Monte Carlo approximation of posterior probabilities for structural features, such as the presence of an edge, a path, or a subgraph \citep{friedman2003being}.
 As demonstrated in our case study (\Cref{subsubsec-casestudy}),
 such uncertainty quantification is crucial for knowledge discovery.

\subsection{Bayesian Causal Discovery}

Bayesian causal discovery provides a principled way to quantify
the inference uncertainty of causal graph structures from finite data.
Existing approaches have addressed two inference targets
\citep{lorch2021dibs}.
The first is the posterior distribution $\pr(\biA \mid \mathcal{D})$
over a causal graph $\biA$:
\begin{align*}
  \pr(\biA \mid \mathcal{D}) \propto \pr(\biA) 
    \int \pr(\Theta \mid \biA) \pr(\mathcal{D} \mid \Theta, \biA) \ \mathrm{d}\Theta,
\end{align*}
where the likelihood parameters $\Theta$ are integrated out.
The second is the joint posterior 
$\pr(\biA,\Theta \mid \mathcal{D})$ over the graph and
the parameters.

The former target has been widely studied for quantifying graph-level
uncertainty
\citep{charpentier2022differentiable,cundy2021bcd,deleu2022bayesian,geiger1994learning,giudici2003improving,goudie2016gibbs,hoyer2009bayesian,kim2003inferring,madigan1995bayesian,ong2025metacadi},
whereas recent methods have focused on the latter joint posterior to quantify
uncertainty in both causal graphs and SCM parameters
\citep{annadani2023bayesdag,deleu2023joint,lorch2021dibs,toth2024effectivebayes}.
Such joint posterior inference is useful for downstream tasks
\citep{emezue2023benchmarking},
such as causal effect estimation
\citep{shalit2017estimating,johansson2022generalization,chikahara2024differentiable,iwata2024metalearning},
subgroup analysis \citep{bargagli2020causal,chikahara2022feature,zhao2022selective},
and causality-based algorithmic fairness
\citep{chikahara2021learning,chikahara2026fairness,kusner2017counterfactual,wu2019pc}.

Our framework addresses the former inference target by building on DDS
\citep{charpentier2022differentiable}, but extends its target
to mean and variance causal
graphs, i.e., $\pr(\biA^M,\biA^V \mid \mathcal{D})$.
We adopt this target because full joint posterior inference over
$\pr(\biA^M,\biA^V,\Theta \mid \mathcal{D})$ would require capturing
uncertainty in mean and variance function parameters for each variable,
which is substantially challenging due to the difficulty of fitting
heteroscedastic likelihoods discussed in \Cref{subsec-difficulty}.
That said, extending our method to such joint posterior inference is left as a promising direction for future work, 
which would enable a broader range of downstream applications, such as the Bayesian approach to the targeted intervention strategy illustrated in the drug discovery scenario in \Cref{example-drug} \citep{agrawal2019abcd,tigas2022interventions}.

\section{Conclusion} \label{sec-conc}

This paper is the first to propose moment-driven causal discovery, 
advancing causal discovery by enabling moment-specific reasoning about complex causal mechanisms.
As a first step,
we derive identifiability conditions for mean and variance causal graphs and
propose a theoretically grounded variational inference framework that learns them directly,
without relying on intermediate moment-agnostic graph estimates.
This direct approach yields principled uncertainty quantification for the inferred graphs
and can further improve sample efficiency by incorporating prior knowledge.

As a promising direction for future work,
we aim to establish a general identifiability theory for moment-dependent causal graph structures.
This challenging yet important problem will enable the inference of \textit{higher-order moment causal graphs}, which would elucidate how distributional characteristics like skewness and kurtosis are determined in complex real-world phenomena.

\begin{acks}
This work was supported by JST ACT-X (JPMJAX23CF).
\end{acks}

\bibliographystyle{ACM-Reference-Format}
\bibliography{main}

\appendix

\section{Assumptions} \label{sec-asmp}

This section describes the assumptions for deriving the identifiability conditions in \Cref{sec-proof}.

We make a standard assumption in causal discovery:
\asmpCsuff*
Assumption \ref{asmp-csuff} requires that there is no unobserved common cause of observed variables $\biX$; this assumption is widely used in the causal inference literature \citep{glymour2019review}.
Satisfying Assumption \ref{asmp-csuff} and the DAG condition on $\cg$ 
ensures
the causal Markov condition \citep{pearl2009causality},
which assumes that each variable is conditionally independent of 
the variables represented as its non-descendants in $\cg$. 

In addition, we make another classical assumption 
that ensures identifiability:
\asmpCm*
Causal minimality requires that 
joint distribution $\pr(\biX)$ be Markov with respect to (moment-agnostic) causal graph $\cg$ but not with respect to any proper subgraph of $\cg$ \citep{spirtes2001causation}.
Roughly speaking, satisfying it with respect to causal graph $\cg$ implies that
 removing any edge from $\cg$ will violate the causal Markov condition, and
hence the conditional independence relations in joint distribution $\pr(\biX)$ 
will no longer be compatible with $\cg$.
Causal minimality, 
which is standard for deriving the identifiability under a restricted class of SCMs, is weaker than the faithfulness assumption \citep{peters2014causal}.

Furthermore, we make two additional assumptions that are specific to our results:
\asmpDag*
\asmpOrder*
As mentioned in \Cref{subsec-identi}, 
both assumptions are needed to ensure that  moment-agnostic causal graph $\cg$ is a DAG,
which has the same permutation as $\cgm$ and $\cgv$:
\begin{corollary}
  \label{cor-DAG}
  Under Assumptions \ref{asmp-dag} and \ref{asmp-order}, 
  moment-agnostic causal graph $\cg$ is a DAG with the same permutation as the mean and variance causal graphs $\cgm$ and $\cgv$.
\end{corollary}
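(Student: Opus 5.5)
The plan is to use the shared topological ordering from Assumption~\ref{asmp-order} directly. By Assumption~\ref{asmp-dag}, $\cgm$ and $\cgv$ are DAGs, and Assumption~\ref{asmp-order} provides a single permutation $\pi$ that is a valid topological ordering of both. In terms of edges, this means
\begin{align*}
A^M_{i,j} = 1 \implies \pi(i) < \pi(j), \qquad A^V_{i,j} = 1 \implies \pi(i) < \pi(j).
\end{align*}
In matrix form, $\biPi \biAm \biPi^{\top}$ and $\biPi \biAv \biPi^{\top}$ are both strictly upper triangular, which is exactly the decomposition in Eq.~\eqref{eq-DAG}.

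Next I consider the moment-agnostic graph $\cg$, whose adjacency matrix is $A_{i,j} = A^M_{i,j} \lor A^V_{i,j}$. Any edge $X_i \rightarrow X_j$ of $\cg$ lies in $\cgm$ or in $\cgv$, so in either case $\pi(i) < \pi(j)$. Equivalently, the permuted matrix $\biPi \biA \biPi^{\top}$ is the entrywise OR of two strictly upper-triangular binary matrices and is therefore strictly upper triangular.

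Acyclicity then follows. Suppose $\cg$ had a directed cycle $X_{i_1} \rightarrow X_{i_2} \rightarrow \cdots \rightarrow X_{i_k} \rightarrow X_{i_1}$. Applying the edge property along the cycle gives $\pi(i_1) < \pi(i_2) < \cdots < \pi(i_k) < \pi(i_1)$, a contradiction. Hence $\cg$ is a DAG. The same chain of inequalities shows that any directed path from $X_i$ to $X_j$ in $\cg$ forces $\pi(i) < \pi(j)$, so $\pi$ satisfies the paper's definition of a permutation (topological ordering) for $\cg$. It is therefore a permutation shared by $\cg$, $\cgm$ and $\cgv$.

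No step is a real obstacle. The only point needing care is to be explicit that "same permutation" means one particular $\pi$ valid for all three graphs, and not that every topological ordering of $\cg$ coincides with those of $\cgm$ or $\cgv$. The latter can fail: $\cg$ has more edges, so its set of valid orderings may be strictly smaller. I would state the result with this existential reading.
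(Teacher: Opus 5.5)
Your proposal is correct and follows the paper's proof essentially step for step: the paper likewise checks that every edge of $\cg$, lying in $\cgm$ or $\cgv$, satisfies $\pi(u) < \pi(v)$ (so $\pi$ is a topological ordering of $\cg$), and then rules out a directed cycle via the contradictory chain $\pi(v_1) < \cdots < \pi(v_k) < \pi(v_1)$. Your matrix-form restatement via Eq.~\eqref{eq-DAG} and your remark on the existential reading of ``same permutation'' are harmless additions that the argument does not need.
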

\begin{proof}
  Let $\pi$ be a permutation (i.e., a valid topological ordering) of both $G^{M}$ and $G^{V}$. We first show that $\pi$ is also a valid topological ordering for their union graph $G$ and then prove that $G$ is a DAG.

  Since a directed path in a DAG is a concatenation of a finite number of directed edges, it is sufficient to verify that $\pi(u) < \pi(v)$ holds for every edge $(u, v) \in G$. By assumption, $\pi$ is a valid topological ordering over both $G^{M}$ and $G^{V}$.
  Thus, for every $(u, v) \in G$, regardless of whether it is present in $G^{M}$, $G^{V}$, or both, we have $\pi(u) < \pi(v)$, and hence $\pi$ is a valid topological ordering for $G$.
  
  Next, we show that the union graph $G$ is acyclic. Suppose that $G$ contains a directed cycle $v_1 \rightarrow \cdots \rightarrow v_k \rightarrow v_1$ over the node(s) $v_1, \ldots, v_k$ ($k \geq 1$). Then, since $\pi$ is a topological ordering of $G$, it must satisfy $\pi(v_1) < \cdots < \pi(v_k) < \pi(v_1)$;
  however, $\pi(v_1) < \pi(v_1)$ is a contradiction.
  Therefore, such a cycle cannot exist.
  Hence, $G$ is a DAG and $\pi$ is its valid topological ordering.
  \end{proof}

\begin{remark} \label{remark-asmp}
  The DAG condition on the causal graph (Corollary \ref{cor-DAG}) is a standard assumption for showing the identifiability in the HNM literature \citep{xu2022inferring,strobl2023identifying,lin2024skewness,yin2024effective}.
  Accordingly, the corresponding Assumptions \ref{asmp-dag} and \ref{asmp-order} in our setup are also standard. 
\end{remark}

\section{Identifiability Proof} \label{sec-proof}

This section presents the proof of \Cref{th2}.
We take three steps: 
\begin{enumerate}[leftmargin=1cm]
  \renewcommand{\labelenumi}{(\roman{enumi})}
  \item We show the identifiability of moment-agnostic causal graph $\cg$ of our mean-variance HNM for bivariate case $\nf = 2$ (\Cref{subsubsec-bivariate}).
  \item We extend these results to a multivariate setup $\nf > 2$ (\Cref{subsubsec-multivariate}).
  \item We prove \Cref{th2} by deriving the sufficient conditions for the identification of mean and variance causal graphs $\cgm$ and $\cgv$ (\Cref{subsec-mean-var-graph}).
\end{enumerate}


\subsection{Identifiability of Moment-Agnostic Causal Graph of Mean-Variance HNM} \label{subsec-moment-agnostic}

\subsubsection{Bivariate Case ($\nf = 2$)} \label{subsubsec-bivariate}

We prove bivariate identifiability of a moment-agnostic causal graph $\cg$ in two steps:
\begin{enumerate}[leftmargin=1cm]
  \renewcommand{\labelenumi}{(\myra-\arabic{enumi})}
  \item We illustrate \textit{unidentifiable scenarios}, 
where the causal direction between $X_1$ and $X_2$ in moment-agnostic causal graph $\cg$ is unidentifiable from joint distribution $\pr(X_1, X_2)$.
To derive such scenarios, we apply the results of \citet{khemakhem2021causal},
which also hold for our mean-variance HNM under our assumptions. 
  \item We derive the sufficient conditions that exclude these unidentifiable scenarios.
\end{enumerate}
To highlight the differences between the original HNM and our mean-variance HNM, below we 
overview the existing results on the original HNM and then extend them to our mean-variance HNM.


\textbf{Results on the original HNM:}
\citet{khemakhem2021causal} show that
two (original) HNMs with opposite causal directions can induce the same distribution $\pr(X_1, X_2)$ under the following two scenarios:
\begin{theorem}[{\citet[Theorem 2]{khemakhem2021causal}}]\label{th-khemakhem}
  \label{th:iden_gen}
  Let $E_1, E_2 \sim \mathcal{N}(0, 1)$ be standard Gaussian noises,
  and let 
  $m_1, m_2, v_1$ and $v_2$ be the twice-differentiable scalar functions on $\R$ 
  that satisfy $v_1(\cdot) > 0$ and $v_2(\cdot) > 0$.
  Assume that $X_2$'s values are given by the forward model:  
  \begin{equation} \label{modmodelxy}
  X_2 = m_2(X_1) + v_2(X_1) E_2,
  \end{equation}
  where $E_2 \indep X_1$. Suppose that joint distribution $\pr(X_1, X_2)$ is also compatible with the backward model:
  \begin{equation} \label{modmodelyx}
    X_1 = m_1(X_2) + v_1(X_2) E_1,
  \end{equation}
  where $E_1 \indep X_2$. Then either of the two scenarios must hold:
  \begin{enumerate}[leftmargin=1cm]
      \item \label{scenario1} $(m_2, v_2) = \left(\frac{B}{H}, \frac{1}{H}\right)$ and $(m_1, v_1) = \left(\frac{B'}{H'}, \frac{1}{H'}\right)$ where $B$ and $B'$ are polynomials of degree two or less, $H > 0$ and $H' > 0$ are two-order polynomials, and $\pr(X_1)$, $\pr(X_2)$ are strictly log-mix-rational-log.\footnote{See \citet[Definition 1]{khemakhem2021causal} for the definition of the (strictly) log-mix-rational-log density.}
      \item \label{scenario2} $m_1, m_2$ are linear, $v_1, v_2$ are constant, and $\pr(X_1)$, $\pr(X_2)$ are Gaussian distribution.
  \end{enumerate}
  \end{theorem}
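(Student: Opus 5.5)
The plan is to compare the two factorizations of the joint log-density and extract a functional equation whose mixed partial derivatives separate the variables. With standard Gaussian noise and $E_2 \indep X_1$, the forward model \eqref{modmodelxy} gives
\[
\log p(x_1,x_2) = \log p(x_1) - \log v_2(x_1) - \tfrac{1}{2}\lambda_2(x_1)\bigl(x_2 - m_2(x_1)\bigr)^2 + \mathrm{const},
\]
where $\lambda_2 \coloneqq v_2^{-2}$. The backward model \eqref{modmodelyx} gives the symmetric expression, with the roles of $x_1$ and $x_2$ exchanged and $\lambda_1 \coloneqq v_1^{-2}$. Both expressions must agree for all $(x_1,x_2)\in\R^2$.

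The key observation is that the forward expression is a quadratic polynomial in $x_2$ with coefficients depending on $x_1$, while the backward expression is quadratic in $x_1$ with coefficients depending on $x_2$. Applying $\partial_{x_1}\partial_{x_2}$ to both sides kills the marginal terms and leaves
\[
-\lambda_2'(x_1)\,x_2 + (\lambda_2 m_2)'(x_1) = -\lambda_1'(x_2)\,x_1 + (\lambda_1 m_1)'(x_2).
\]
Differentiating once more in $x_1$ and once more in $x_2$ gives $\lambda_2''(x_1) = \lambda_1''(x_2)$ for all $x_1, x_2$. Hence both second derivatives equal a common constant $c$, and $\lambda_1, \lambda_2$ are polynomials of degree at most two.

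Substituting back into the displayed equation, I expect $(\lambda_2 m_2)'$ and $(\lambda_1 m_1)'$ to be affine. This forces $\lambda_2 m_2 = B$ and $\lambda_1 m_1 = B'$ with $B, B'$ polynomials of degree at most two. Hence $m_2 = B/\lambda_2$ and $m_1 = B'/\lambda_1$. Positivity of $\lambda_i$ on all of $\R$ rules out a genuinely linear $\lambda_i$, so each $\lambda_i$ is either a positive constant or a positive second-order polynomial.

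Finally I would split into cases. If $\lambda_1, \lambda_2$ are both constant, the mixed-derivative equation forces $m_1, m_2$ to be linear. Then $\log p(x_1)$ must be quadratic, because the remaining, unmixed terms must match a function of $x_2$ alone. This gives the Gaussian scenario~\ref{scenario2}. Otherwise $\lambda_i$ is a genuine two-order polynomial $H$, which is scenario~\ref{scenario1}, after matching the paper's parametrization of $(m,v)$ in terms of $(B,H)$. In that case the marginal $\log p(x_1)$ is recovered by evaluating the identity at a fixed $x_2$. The result is a combination of rational terms and $\log H$, which should be exactly the log-mix-rational-log form.

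I expect the main obstacle to be this last step. It requires verifying that the leftover marginal terms are consistent, i.e. that integrability does not force further degeneracy, and characterizing the marginals precisely as strictly log-mix-rational-log rather than merely showing the densities have some rational-log form. I would handle it by plugging the derived forms back into the equality of log-densities and solving for $\log p(x_1)$ and $\log p(x_2)$ explicitly.
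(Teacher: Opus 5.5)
The paper gives no proof of this statement; it imports it from Khemakhem et al. Your argument therefore has to stand on its own, and most of it does. Taking $\partial_{x_1}\partial_{x_2}$ of the two factorizations and separating variables correctly yields $\lambda_2''\equiv\lambda_1''\equiv c$, with $\lambda_1 m_1$ and $\lambda_2 m_2$ polynomials of degree at most two. Evaluating the identity at a fixed $x_2$ also shows that $\log p(x_1)$ is twice differentiable, which licenses the differentiation.

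One point in your case split should be made explicit. Scenario~1 needs \emph{both} $H$ and $H'$ to be two-order, and your ``otherwise'' clause does not by itself exclude a mixed case with one precision constant and the other quadratic. The common constant $c$ excludes it, because $\lambda_1$ and $\lambda_2$ share the leading coefficient $c/2$:
\begin{itemize}
\item $c=0$ makes both affine, hence both constant by positivity;
\item $c>0$ makes both genuinely quadratic;
\item $c<0$ is impossible.
\end{itemize}

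The genuine gap is the final identification, ``which is scenario~1, after matching the paper's parametrization.'' No such matching exists.
\begin{itemize}
\item What you have proved is $v_2^{-2}=H$ and $m_2=B/H$, i.e.\ $v_2=H^{-1/2}$.
\item The statement as written asserts $v_2=1/H$. That would give $v_2^{-2}=H^2$ of degree four, contradicting your own conclusion that the precision has degree at most two.
\end{itemize}

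The printed statement is in fact false. Take $p(x_1,x_2)\propto\exp\bigl(-\tfrac12(x_1^2+x_2^2+x_1^2x_2^2)\bigr)$. It satisfies both models with $m_1=m_2=0$ and $v_1(x)=v_2(x)=(1+x^2)^{-1/2}$. Yet it falls under neither scenario: not Scenario~2, because the $v_i$ are not constant; and not Scenario~1 as written, because $\sqrt{1+x^2}$ is not a polynomial.

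So rather than glossing over the parametrization, say explicitly that Scenario~1 must be read as $(m_2,v_2^{-2})=(B/H,H)$ and $(m_1,v_1^{-2})=(B'/H',H')$, equivalently $v_2=1/\sqrt{H}$ and $v_1=1/\sqrt{H'}$. This corrected version is exactly what your argument establishes.

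With that correction, the step you flag as the main obstacle is routine. The theorem only asserts necessary conditions, so no consistency or integrability check is needed. Setting $x_2=0$ in the identity gives, up to an additive constant,
$\log p(x_1)=-\tfrac12\log H(x_1)+\tfrac{B(x_1)^2}{2H(x_1)}+q(x_1)$, with $q$ a polynomial of degree at most two. What remains is to compare this form with Khemakhem et al.'s definition of a strictly log-mix-rational-log density.
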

  Assuming that $E_1$ and $E_2$ are (standard) Gaussian noises,
  Theorem \ref{th-khemakhem} shows that the moment-agnostic graph becomes unidentifiable only in Scenarios (\ref{scenario1}) and (\ref{scenario2}). 
  While $v_{\nfi}$ ($\nfi =1, 2$) is a constant under Scenario (\ref{scenario2}),
  it \textbf{cannot} be a constant under Scenario (\ref{scenario1})
  because it must be the inverse of \textbf{two-order} polynomial function.
  
  Building on this result, 
  \citet{yin2024effective} provide the identifiability conditions for the original HNM:
  \begin{theorem}[{\citet[Theorem 2]{yin2024effective}}]\label{th-yin-bi}
    Under Assumptions \ref{asmp-csuff}, \ref{asmp-cm}
    and the condition that the (moment-agnostic) causal graph $\cg$ is a DAG,
    $\cg$ is identifiable
    from observational data distribution $\pr(X_1, X_2)$
    if $\pr(X_1, X_2)$ is generated from the (original) HNM in \eqref{eq-HNM} that satisfies the following conditions for each $\nfi = 1, 2$:
    (a) $m_{\nfi}$ is a nonlinear function, 
    (b) $v_{\nfi}$ is a piecewise function, and 
    (c) $E_{\nfi}$ is Gaussian noise.
  \end{theorem}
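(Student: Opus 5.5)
The plan is to argue by contradiction using Theorem~\ref{th-khemakhem}. Suppose $\pr(X_1,X_2)$ is generated by an HNM in \eqref{eq-HNM} satisfying (a)--(c), and assume without loss of generality that the true graph $\cg$ is $X_1 \rightarrow X_2$.

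\textbf{Step 1: reduce to a direction problem.} Causal minimality (Assumption~\ref{asmp-cm}) rules out the empty graph. If $\cg$ had no edge, then $X_1 \indep X_2$ would hold. Minimality also implies that the true model genuinely depends on $X_1$, so $X_1 \not\indep X_2$, and this excludes the empty graph as a competing explanation. By causal sufficiency (Assumption~\ref{asmp-csuff}) together with the DAG structure, $E_2 \indep X_1$. The forward model \eqref{modmodelxy} therefore holds with Gaussian $E_2$, by (c). Identifiability can then fail only if the same $\pr(X_1,X_2)$ also admits a backward model \eqref{modmodelyx} with $E_1 \indep X_2$, Gaussian $E_1$, and functions satisfying (a)--(b).

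\textbf{Step 2: apply Theorem~\ref{th-khemakhem} locally.} Theorem~\ref{th-khemakhem} requires twice-differentiable $m_j, v_j$, whereas (b) only gives piecewise functions. I would therefore restrict to an open rectangle $I_1\times I_2$ on which $m_2, v_2$ (in $x_1$) and $m_1, v_1$ (in $x_2$) are all smooth, i.e.\ in the interior of one piece. On this rectangle the factorization identity
\[
\log p(x_1)+\log p(x_2\mid x_1)=\log p(x_2)+\log p(x_1\mid x_2)
\]
holds with Gaussian conditionals. Khemakhem et al.'s argument is local: it differentiates this identity in $x_1$ and $x_2$. Running it on the rectangle forces one of the two scenarios there. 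Either $m_2$ is linear and $v_2$ is constant on $I_1$, or $(m_2,v_2)=(B/H,1/H)$ on $I_1$ with $H$ a positive degree-two polynomial.

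\textbf{Step 3: exclude both scenarios.} To exclude Scenario~(\ref{scenario2}), I would propagate across pieces. If $m_2$ were linear on every piece region, then a linear backward fit on each piece, together with continuity of the joint density, forces $m_2$ to be globally linear. This contradicts (a).

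To exclude Scenario~(\ref{scenario1}), note that $1/H$ with $\deg H=2$ is real-analytic on all of $\R$ and is neither constant nor piecewise-defined with genuine breakpoints. Hence, if the scenario holds on one piece, identity-theorem arguments applied to the (analytic) log-densities would extend the form $1/H$ across the adjacent breakpoint. That contradicts $v_2$ being a nontrivially piecewise function.

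\textbf{Main obstacle.} I expect the hardest step to be making the localization of Step~2 rigorous. Two points need care. First, one must show that Khemakhem et al.'s derivation, stated globally, goes through on a product of open intervals. Second, one must pin down exactly what ``piecewise'' must exclude, namely a single global rational form $1/H$. My first attempt would be to reread their proof and check that only pointwise partial-derivative identities are used, so that the conclusions hold on each open piece. I would then use analyticity of rational functions to glue the pieces.
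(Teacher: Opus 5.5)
Your skeleton is the paper's: reduce to coexistence of Gaussian forward and backward models, invoke Theorem~\ref{th-khemakhem}, and let (a) exclude Scenario~\ref{scenario2} and (b) exclude Scenario~\ref{scenario1}. The paper applies Theorem~\ref{th-khemakhem} as stated and never addresses the smoothness mismatch you noticed. The problem is that your proposed repair does not go through as written.

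\textbf{Step~2 overclaims.} On a rectangle $I_1\times I_2$, the mixed-partial computation only shows that $1/v_2^2$ and $m_2/v_2^2$ agree on $I_1$ with polynomials of degree at most two. (So in Scenario~\ref{scenario1} it is really $v_2^2$, not $v_2$, that equals $1/H$.) The two-scenario dichotomy needs global information. Positivity of $1/v_2^2$ on all of $\R$ excludes $\deg H=1$. When $H$ is constant, positivity of $1/v_1^2$ on all of $\R$ forces $B$ to be affine. Neither constraint is available locally. For instance, $-\tfrac12\bigl((a_0+a_1x_1)x_2^2+(g_0+g_1x_2)x_1^2\bigr)$ has concave-quadratic slices in both variables wherever $a_0+a_1x_1>0$ and $g_0+g_1x_2>0$. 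It gives a non-constant affine $1/v_2^2$ and a quadratic-over-affine $m_2$, which is neither scenario. So your Step~3 case split is incomplete.

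\textbf{The gluing tools you name also fail.} Continuity is too weak: a continuous piecewise-linear $m_2$ with constant $v_2$ yields a continuous joint density. You also have no a priori analyticity of $\log p$ in $x_1$ across a breakpoint of $m_2$ or $v_2$, so the identity theorem has nothing to propagate.

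\textbf{The missing idea.} The backward model itself supplies the regularity you need, in fact polynomiality. For each fixed $x_2$, up to an additive constant,
$\log p(x_1,x_2)=\log p(x_2)-\log v_1(x_2)-(x_1-m_1(x_2))^2/(2v_1(x_2)^2)$.
This is a quadratic polynomial in $x_1$ on all of $\R$, no matter where $m_2$ and $v_2$ break. The forward model makes $\log p(x_1,x_2)$ quadratic in $x_2$ for each $x_1$, with $x_2^2$- and $x_2$-coefficients $-1/(2v_2(x_1)^2)$ and $m_2(x_1)/v_2(x_1)^2$. Interpolate at three values of $x_2$, chosen via Fubini so that the a.e.\ identities hold there. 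This expresses those coefficients as fixed linear combinations of quadratics in $x_1$, so they are global polynomials of degree at most two. No differentiability and no localization are needed.

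The global positivity arguments above then yield exactly the two scenarios, and (a) and (b) exclude them as in the paper. If you prefer to keep your mixed-partial identity, note that it holds at every point off the breakpoint lines. Comparing two values of $x_1$ in different pieces at a common $x_2$ then couples the pieces, and the undifferentiated identity removes piece-dependent integration constants.
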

  Condition (c) is necessary to ensure the Gaussian noise assumption in \Cref{th-khemakhem}.
  Conditions (a) and (b) suffice to rule out
  Scenarios \ref{scenario2} and \ref{scenario1}, respectively:
  Nonlinear $m_{\nfi}$ excludes the linearity required in Scenario~\ref{scenario2},
  and piecewise $v_{\nfi}$ cannot be expressed as a polynomial form in Scenario \ref{scenario1}.

\textbf{Extension to mean-variance HNM:}
We now extend \Cref{th-yin-bi} to our mean-variance HNM 
by taking into account the difference between the original and our mean-variance HNMs.

As discussed in \Cref{subsec-problem}, 
any instance of the mean-variance HNM can be reformulated as the original HNM 
by incorporating input masking into the functions in the original HNM.
To avoid confusion,
we use the notation 
$m^{\mathrm{O}}_{\nfi}(\pafi)$ and $v^{\mathrm{O}}_{\nfi}(\pafi)$ ($\nfi = 1, 2$) 
for these functions in this section. 
Using this notation, 
the structural equation of $X_2$ when $X_1 \rightarrow X_2$ can be formulated as
\begin{align*}
  X_2 = m_2(B^M_{\mathrm{pa}^M(2)} X_1) + v_2(B^V_{\mathrm{pa}^V(2)} X_1) E_2 = m^{\mathrm{O}}_2(X_1) + v^{\mathrm{O}}_2(X_1) E_2,
\end{align*}
where $B^M_{\mathrm{pa}^M(2)}, B^V_{\mathrm{pa}^V(2)} \in \{0, 1\}$ are binary masking variables;
for instance, if $B^M_{\mathrm{pa}^M(2)} = 0$, 
then $m^{\mathrm{O}}_2(X_1)$ becomes constant with respect to $X_1$.
Under causal minimality (Assumption \ref{asmp-cm}),
$B^M_{\mathrm{pa}^M(2)}$ and $B^V_{\mathrm{pa}^V(2)}$ 
are assumed to be not simultaneously $0$,
ensuring that at least either of $m^{\mathrm{O}}_2(X_1)$ and $v^{\mathrm{O}}_2(X_1)$ depends on $X_1$.

Fortunately, 
\Cref{th-khemakhem} already covers the cases where the functions are constants, 
as it only assumes their twice-differentiability, which is trivially satisfied by constant functions.  
Therefore, 
the unidentifiable scenarios for our mean-variance HNM 
are again limited to Scenarios \ref{scenario1} and \ref{scenario2}.
We show that Conditions (a), (b), and (c) in \Cref{th-yin-bi} are sufficient 
to exclude these scenarios:
\begin{theorem}\label{th-bi}
  Under Assumptions \ref{asmp-csuff}, \ref{asmp-cm}, \ref{asmp-dag}, and \ref{asmp-order},
  moment-agnostic causal graph $\cg$ of the mean-variance HNM is identifiable
  from distribution $\pr(X_1, X_2)$
  if for $\nfi = 1, 2$,
  (a) $m_{\nfi}$ is a nonlinear function, 
  (b) $v_{\nfi}$ is a piecewise function, and 
  (c) $E_{\nfi}$ is Gaussian noise.
\end{theorem}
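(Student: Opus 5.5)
The plan is to reduce Theorem \ref{th-bi} to the unidentifiability characterization of \Cref{th-khemakhem}. For $\nf = 2$, Assumptions \ref{asmp-dag} and \ref{asmp-order} together with \Cref{cor-DAG} make the moment-agnostic graph $\cg$ a DAG. It therefore has exactly one of three forms: empty, $X_1 \rightarrow X_2$, or $X_2 \rightarrow X_1$.

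\textbf{Step 1: the empty graph.} Suppose the true graph is empty, so $X_1 \indep X_2$. Any model with an edge $X_1 \rightarrow X_2$ would have to satisfy $\pr(X_2 \mid X_1) = \pr(X_2)$. This contradicts causal minimality (Assumption \ref{asmp-cm}). The converse holds as well: if the true graph has an edge, minimality gives $X_1 \not\indep X_2$, so the empty graph is excluded. Consequently $\pr(X_1,X_2)$ determines whether $\cg$ is empty, and it remains only to orient an edge that is known to exist.

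\textbf{Step 2: rewrite in the original HNM form.} Assume without loss of generality that the truth is $X_1 \rightarrow X_2$. Using the masking representation, write
\[
X_2 = m^{\mathrm{O}}_2(X_1) + v^{\mathrm{O}}_2(X_1) E_2 ,
\]
where $m^{\mathrm{O}}_2 = m_2(B^M X_1)$ and $v^{\mathrm{O}}_2 = v_2(B^V X_1)$, and $(B^M, B^V) \neq (0,0)$ by minimality. Suppose for contradiction that a backward mean-variance HNM $X_1 = m^{\mathrm{O}}_1(X_2) + v^{\mathrm{O}}_1(X_2) E_1$ with $E_1 \indep X_2$ induces the same distribution. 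By Condition (c) we may standardize the noises to be $\mathcal{N}(0,1)$, absorbing their scales into $v$. The masked functions include constants, and constants are twice differentiable. Hence \Cref{th-khemakhem} applies, and either Scenario~\ref{scenario1} or Scenario~\ref{scenario2} must hold.

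\textbf{Step 3: rule out both scenarios.} This is the main step, and it uses a case split on $(B^M, B^V)$.
\begin{itemize}
\item If $B^M = 1$, Condition (a) makes $m^{\mathrm{O}}_2$ nonlinear. This excludes Scenario~\ref{scenario2}, which requires $m_2$ to be linear.
\item If $B^M = 0$, then $m^{\mathrm{O}}_2$ is constant, so Scenario~\ref{scenario2} needs $v^{\mathrm{O}}_2$ to be constant as well. That forces $B^V = 0$, contradicting minimality.
\item Scenario~\ref{scenario1} requires $v^{\mathrm{O}}_2 = 1/H$ with $H$ a genuine second-order positive polynomial, so $v^{\mathrm{O}}_2$ is non-constant. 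If $B^V = 0$, then $v^{\mathrm{O}}_2$ is constant, which is a contradiction. If $B^V = 1$, then Condition (b) makes $v^{\mathrm{O}}_2$ piecewise, so it cannot equal the reciprocal of a global quadratic. This is the same reasoning as in \Cref{th-yin-bi}.
\end{itemize}
The same argument applied to the backward functions $(m^{\mathrm{O}}_1, v^{\mathrm{O}}_1)$, which under the hypothesis also satisfy (a)--(c), excludes the symmetric cases. Hence no backward model exists, and the direction is identified.

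\textbf{Main obstacle.} The delicate point is Step 3's handling of degenerate masks, in particular the case $B^M = 0$, $B^V = 1$ under Scenario~\ref{scenario1}. Here the mean is constant, which is a degree-zero polynomial over $H$ and only loosely compatible with $B/H$. The exclusion must then rest entirely on the piecewise (non-polynomial) nature of $v_2$. I would make precise that ``piecewise'' means $v_2$ is not globally equal to $1/H$ for any quadratic $H$. In particular it has a nonanalytic breakpoint, as with the leaky ReLU networks used in the method. Whatever the exact formulation, it should be applied as Yin et al.\ do for Theorem \ref{th-yin-bi}.
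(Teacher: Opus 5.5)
Your proposal is correct and follows essentially the same route as the paper. Both proofs rewrite each direction in original-HNM form via the binary masks, observe that constants are twice differentiable so \Cref{th-khemakhem} still applies under Gaussian noise, use causal minimality to exclude $B^M = B^V = 0$, and then use the masks with Conditions (a) and (b) to rule out Scenarios~\ref{scenario1} and~\ref{scenario2}; the paper organizes this as three mask cases $(B^M,B^V) \in \{(1,0),(0,1),(1,1)\}$ rather than scenario by scenario.

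Your explicit handling of the empty graph is a small addition the paper leaves implicit. Your caveat about what ``piecewise'' must mean is a looseness the paper's own argument shares, not one you introduced; it also bears on the twice-differentiability hypothesis of \Cref{th-khemakhem}, which a function with breakpoints need not satisfy.
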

\begin{proof}
  The forward and backward models for the mean-variance HNMs are given by
  \begin{align}
    &X_2 = m_2(B^M_{\mathrm{pa}^M(2)} X_1) + v_2(B^V_{\mathrm{pa}^V(2)} X_1) E_2 = m^{\mathrm{O}}_2(X_1) + v^{\mathrm{O}}_2(X_1) E_2 , \label{eq-MVHNM-bi-xy} \\
    &X_1 = m_1(B^M_{\mathrm{pa}^M(1)} X_2) + v_1(B^V_{\mathrm{pa}^V(1)} X_2) E_1 = m^{\mathrm{O}}_1(X_2) + v^{\mathrm{O}}_1(X_2) E_1 , \label{eq-MVHNM-bi-yx}
  \end{align}
  where $m^{\mathrm{O}}_{\nfi}(X_{\nfi}) \coloneqq m_{\nfi}(B^M_{\mathrm{pa}^M(\nfi)} X_{\mathrm{pa}(\nfi)})$ and $v^{\mathrm{O}}_{\nfi}(x) \coloneqq v_{\nfi}(B^V_{\mathrm{pa}^V(\nfi)} X_{\mathrm{pa}(\nfi)})$ ($\nfi = 1, 2$) denote the functions in the original HNM, and $B^M_{\mathrm{pa}^M(\nfi)}, B^V_{\mathrm{pa}^V(\nfi)} \in \{0, 1\}$ are binary masking variables.

  Under Condition (c), unidentifiable scenarios are restricted to Scenarios \ref{scenario1} and \ref{scenario2} in \Cref{th-khemakhem}.
  As with the ANM cases \citep[Proposition 17]{peters2014causal},
  causal minimality (Assumption \ref{asmp-cm}) implies 
  the non-constancy of functions and requires that
  functions $m^{\mathrm{O}}_{\nfi}$ and $v^{\mathrm{O}}_{\nfi}$ in Eqs. \eqref{eq-MVHNM-bi-xy} and \eqref{eq-MVHNM-bi-yx} not simultaneously be constant with respect to any input variable in $\pafi$.
  Since this requirement does not allow $B^M_{\mathrm{pa}^M(\nfi)} = B^V_{\mathrm{pa}^V(\nfi)} = 0$,
  under Conditions (a) and (b) on the mean and variance functions $m_{\nfi}$ and $v_{\nfi}$,
  the functions in the original HNM belong to either of the following three cases:
  \begin{enumerate}[leftmargin=0.7cm]
    \item \label{case1} nonlinear $m^{\mathrm{O}}_{\nfi}$ and constant $v^{\mathrm{O}}_{\nfi}$ (i.e., $B^M_{\mathrm{pa}^M(\nfi)} = 1$ and $B^V_{\mathrm{pa}^V(\nfi)} = 0$),
    \item \label{case2} constant $m^{\mathrm{O}}_{\nfi}$ and piecewise (\textbf{but not constant}) $v^{\mathrm{O}}_{\nfi}$ (i.e., $B^M_{\mathrm{pa}^M(\nfi)} = 0$ and $B^V_{\mathrm{pa}^V(\nfi)} = 1$), 
    \item \label{case3} nonlinear $m^{\mathrm{O}}_{\nfi}$ and piecewise $v^{\mathrm{O}}_{\nfi}$ (i.e., $B^M_{\mathrm{pa}^M(\nfi)} = B^V_{\mathrm{pa}^V(\nfi)} = 1$). 
  \end{enumerate}
  In Case \ref{case1}, nonlinear $m^{\mathrm{O}}_{\nfi}$ and constant $v^{\mathrm{O}}_{\nfi}$ exclude Scenario \ref{scenario2} and \ref{scenario1}, respectively.  
  In Case \ref{case2}, piecewise but not constant $v^{\mathrm{O}}_{\nfi}$ suffices to exclude both scenarios.  
  In Case \ref{case3}, both scenarios are excluded in the same way as Case \ref{case1}. 
  Hence, all three cases exclude Scenarios \ref{scenario1} and \ref{scenario2}.
  This proves \Cref{th-bi}.
\end{proof}

\subsubsection{Multivariate Case ($\nf > 2$)} \label{subsubsec-multivariate}

We extend the bivariate identifiability result in \Cref{th-bi} to the multivariate case $\nf > 2$:
\begin{theorem}\label{th-multi}
  Under Assumptions \ref{asmp-csuff}, \ref{asmp-cm}, \ref{asmp-dag} and \ref{asmp-order},
  moment-agnostic causal graph $\cg$ of the mean-variance HNM is identifiable
  from distribution $\pr(\biX)$
  if for $\nfi = 1, \dots, \nf$,
  (a) $m_{\nfi}$ is a nonlinear function, 
  (b) $v_{\nfi}$ is a piecewise function, and 
  (c) $E_{\nfi}$ is Gaussian noise.
\end{theorem}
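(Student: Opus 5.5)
We reduce to the bivariate result (\Cref{th-bi}) by the standard route for restricted SCM classes, namely the multivariate extension of \citet[Theorem 28]{peters2014causal}, as done for the original HNM by \citet{yin2024effective}. First, by \Cref{cor-DAG}, Assumptions \ref{asmp-dag} and \ref{asmp-order} make the moment-agnostic graph $\cg$ a DAG. Next, via the input-masking reformulation of \Cref{remark_MVHNM}, we rewrite the mean-variance HNM as an original HNM with functions $m^{\mathrm{O}}_{\nfi}$ and $v^{\mathrm{O}}_{\nfi}$ over $\pafi$. Together with Assumptions \ref{asmp-csuff} and \ref{asmp-cm}, $\pr(\biX)$ is then Markov and causally minimal with respect to $\cg$.

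Suppose for contradiction that two distinct DAGs $\cg \neq \cg'$, each arising from mean-variance HNMs satisfying (a)--(c), both induce $\pr(\biX)$. Following \citet[Proposition 29]{peters2014causal}, we argue as follows.
\begin{enumerate}[leftmargin=0.7cm]
  \item By causal minimality and the Markov property, there are nodes $X_i, X_{\nfi}$ with $X_i \rightarrow X_{\nfi}$ in $\cg$ and $X_{\nfi} \rightarrow X_i$ in $\cg'$.
  \item There is a set $\biS$ such that $\biS \cup \{X_i\}$ is exactly the parent set of $X_{\nfi}$ in $\cg$, $\biS \cup \{X_{\nfi}\}$ is exactly the parent set of $X_i$ in $\cg'$, and $\biS$ contains no descendants of $X_i, X_{\nfi}$ in either graph.
  \item Condition on $\biS = \bis$. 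By independence of noises from non-descendants, $X_{\nfi} = m^{\mathrm{O}}_{\nfi}(\bis, X_i) + v^{\mathrm{O}}_{\nfi}(\bis, X_i) E_{\nfi}$ with $E_{\nfi} \indep X_i$ under $\pr(\cdot \mid \bis)$, and symmetrically for $X_i$ given $X_{\nfi}$.
\end{enumerate}
The conditional law $\pr(X_i, X_{\nfi} \mid \biS=\bis)$ is then a bivariate HNM in both directions, and since it has a density for all $\bis$, this contradicts \Cref{th-bi} once its hypotheses are verified for the restricted functions $x \mapsto m^{\mathrm{O}}_{\nfi}(\bis, x)$ and $x \mapsto v^{\mathrm{O}}_{\nfi}(\bis, x)$.

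The main obstacle is exactly this verification: conditions (a) and (b) are stated for the full multivariate functions, but their sections at fixed $\bis$ might be linear or constant. We would first use causal minimality, as in \Cref{th-bi}'s proof, to ensure that for $\bis$ in a set of positive measure, at least one of the sections depends nontrivially on $x$. We would then run the three-case analysis of \Cref{th-bi} on the sections: Case \ref{case1} (nonlinear mean, constant variance), Case \ref{case2} (constant mean, non-constant piecewise variance), and Case \ref{case3} (both active). For the nonlinearity and piecewise, non-polynomial nature of the sections, we would interpret (a) and (b) as holding for partial functions, i.e.\ on a positive-measure set of $\bis$; this is implicitly what \citet{yin2024effective} require. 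Sections lying in the unidentifiable Scenarios \ref{scenario1} and \ref{scenario2} of \Cref{th-khemakhem} are then excluded exactly as in the bivariate case, so $\cg = \cg'$.
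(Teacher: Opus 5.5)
Your reduction is the paper's own: \Cref{cor-DAG}, then Proposition~29 and Lemmas~36--37 of \citet{peters2014causal} to pass to the law of $(L,Y)$ given $\biS=\bis$, and then a contradiction with \Cref{th-bi}. Where you go further is the obstacle you flag, and there you are right and the paper's proof is not. The paper simply asserts the contradiction. But \Cref{th-bi} has to be applied to the sections $x\mapsto m^{\mathrm{O}}_{\nfi}(\biq,x)$ and $x\mapsto v^{\mathrm{O}}_{\nfi}(\biq,x)$, and (a)--(b) stated for the full functions say nothing about them.

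No argument can close this gap under the literal hypotheses. Here is a counterexample.
\begin{itemize}
\item Let $X_1,X_2\sim\mathcal{N}(0,1)$ be independent and $X_3=X_1^2+X_2+E_3$. Then $\cg=\{X_1\rightarrow X_3\leftarrow X_2\}$, with nonlinear $m_3$ and constant $v_3$ (allowed by (b), cf.\ \Cref{remark-multi}).
\item The same law is generated by $X_3=X_1^2+\sqrt{2}\,E_3'$ and $X_2=\tfrac{1}{2}(X_3-X_1^2)+\tfrac{1}{\sqrt{2}}E_2'$. Here $E_3'=(X_2+E_3)/\sqrt{2}$ and $E_2'=(X_2-E_3)/\sqrt{2}$ are, together with $X_1$, i.i.d.\ standard Gaussian.
\item The graph $\cg'=\{X_1\rightarrow X_3,\ X_1\rightarrow X_2,\ X_3\rightarrow X_2\}$ is causally minimal. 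Its non-root mean functions $x_1^2$ and $\tfrac{1}{2}(x_3-x_1^2)$ are nonlinear, and Assumptions~\ref{asmp-csuff}--\ref{asmp-order} hold.
\item The only pair Proposition~29 can return is $L=X_2$, $Y=X_3$, with $\biS=\{X_1\}$. Given $X_1=x_1$, the model is linear Gaussian, i.e.\ Scenario~\ref{scenario2}.
\end{itemize}
Letting $v_3$ depend piecewise on $X_1$ reproduces the same failure with non-constant variance functions.

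So the statement as worded (with (a) read for the full function $m_{\nfi}$) is false. Your section-wise reading of (a)--(b) is the analogue of the restricted-ANM condition of \citet{peters2014causal}, which for Gaussian noise requires nonlinearity in every argument. It is a necessary strengthening, not an interpretation. State it as an explicit hypothesis rather than as something implicit in \citet{yin2024effective}.

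Two smaller repairs are also needed.
\begin{itemize}
\item Impose the section condition jointly. For each edge $L\rightarrow Y$ of $\cg$ you need a set of values $\biq$ of positive probability on which the \emph{pair} of sections avoids both Scenario~\ref{scenario1} and Scenario~\ref{scenario2}. Separate positive-measure sets for (a) and (b) need not intersect. In your Case~\ref{case3}, a nonlinear mean section rules out only Scenario~\ref{scenario2}, so you need a variance section not of the form $1/H$ at the same $\biq$.
\item Proposition~29 does not say that $\biS\cup\{X_i\}$ is exactly the parent set of $X_{\nfi}$ in $\cg$. It gives $\biS=\biQ\cup\biR\supseteq\biQ$, with $\biS$ among the relevant non-descendants. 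That is all your conditioning step uses, so the argument survives once this is corrected.
\end{itemize}
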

The proof proceeds in the same way as that of \citet[Theorem 2]{yin2024effective}, which we describe below.
\begin{proof}
  Suppose the joint distribution $\pr(\biX)$ is generated from a mean-variance HNM with moment-agnostic causal graph $\cg$.  
  Assume also that the same distribution can be obtained from another mean-variance HNM with a different moment-agnostic graph $\cg'$, i.e., $\cg \neq \cg'$.
  Assumptions \ref{asmp-dag} and \ref{asmp-order} imply that $\cg$ and $\cg'$ are both DAGs (Corollary \ref{cor-DAG}).
  Therefore, under Assumptions \ref{asmp-csuff}, and \ref{asmp-cm},
  $\pr(\biX)$ satisfies the causal Markov condition and causal minimality with respect to $\cg$ and $\cg'$.
  In such cases, according to Proposition 29 of \citet{peters2014causal},
  there is a pair of variables, $L, Y \in \biX$,
  such that for three variable subsets, $\biQ = \biX_{\mathrm{pa}^{\cg}(Y)} \backslash \{L\}$, $\biR = \biX_{\mathrm{pa}^{\cg'}(L)} \backslash \{Y\}$,
  and $\biS = \biQ \cup \biR$, the following conditions hold:
  \begin{align}
    &L \rightarrow Y \text{ in } \cg; \quad L \leftarrow Y \text{ in } \cg' \label{eq-proof1}\\
    &\biS \subseteq \biX_{\mathrm{nd}^{\cg}(Y)} \backslash \{L\}; \quad \biS \subseteq \biX_{\mathrm{nd}^{\cg'}(L)} \backslash \{Y\}, \label{eq-proof2}
  \end{align}
  where $\biX_{\mathrm{pa}^{\cg}(Y)}$ and $ \biX_{\mathrm{pa}^{\cg'}(L)}$ are the variables expressed as the parents of $Y$ in $\cg$ and $L$ in $\cg'$, respectively, and $\biX_{\mathrm{nd}^{\cg}(Y)}$ and $\biX_{\mathrm{nd}^{\cg'}(L)}$ are the variables represented as the non-descendants of $Y$ in $\cg$ and of $L$ in $\cg'$, respectively. 
  From Lemma 37 in \citet{peters2014causal}, 
  if the structural equation of endogenous variable $X_{\nfi} \in \biX$ ($\nfi \in \{1, \dots, \nf\}$) is given by the following general formulation with scalar exogenous variable $E_{\nfi}$
  \begin{align}
    X_{\nfi} = f_{\nfi}(\biX_{\mathrm{pa}^{\cg}(\nfi)}, E_{\nfi})\quad \mbox{for}\ \nfi = 1, \dots, \nf,  \label{eq-SCM-scalarE}
  \end{align}
  then independence relation $X_{\nfi} \indep \biK$ holds for variable subset $\biK \subseteq \biX_{\mathrm{nd}^{\cg}(\nfi)}$.
  Since the structural equation of the mean-variance HNM is a special case of \eqref{eq-SCM-scalarE},
  we can apply this result to derive the following independence relations: 
  \begin{align}
    E_{Y} \indep \{L, \biS\} \quad \text{ and } \quad E_{L} \indep \{Y, \biS\} \label{eq-proof3}. 
  \end{align}
  Consider the conditioning of random variables $\biS$ on $\biS = \bis$ with $\pr(\bis) > 0$. 
  Let $Y^*$ and $L^*$ be random variables obeying conditional distributions $\pr(Y \mid \biS = \bis)$ and $\pr(L \mid \biS = \bis)$, respectively.  
  According to Lemma 36 in \citet{peters2014causal}, if the independence relations in \eqref{eq-proof3} hold, 
  then the values of $Y^*$ and $L^*$ are determined by 
  \begin{align}
    &Y^* = f_Y(L^*, \biq, E_{Y}); \label{eq-proof4}\\
    &L^* = f_L(Y^*, \bir, E_{L}), \label{eq-proof5}
  \end{align}
  where $\biq, \bir \in \bis$ are the values of variables $\biQ$ and $\biR$.
  Eqs. \eqref{eq-proof4} and \eqref{eq-proof5} imply that 
  joint distribution $\pr(Y^*, L^*)$ is compatible with 
  both the forward and backward models.
  However, this contradicts the bivariate identifiability result in \Cref{th-bi},  
  which rules out such coexistence under the mean-variance HNM.
  Thus, $\cg'$ cannot differ from $\cg$, and $\cg$ is uniquely identifiable from $\pr(\biX)$.
\end{proof}
\begin{remark} \label{remark-multi}
  Conditions (a), (b), and (c) in \Cref{th-multi} are weaker than conditions (A), (B), and (C) in \Cref{th2}: Variance function $v_{\nfi}$ can be a constant under the former but cannot be under the latter. 
  Therefore, conditions (A), (B), and (C) in \Cref{th2} are also sufficient to ensure identifiability, 
  meaning that they are also the identifiability conditions for moment-agnostic causal graph $\cg$. 
\end{remark}

\subsection{Identifiability of Mean and Variance Causal Graphs of Mean-Variance HNM} \label{subsec-mean-var-graph}

Based on the identifiability of moment-agnostic causal graph $\cg$ in \Cref{th-multi},
we prove that the mean and variance causal graphs $\cgm$ and $\cgv$ are also identifiable under the stronger conditions.

In particular, we need additional conditions where 
\textbf{both} mean and variance functions in the mean-variance HNM
are not constant with respect to the inputs.
As noted in \Cref{remark}, excluding such constant cases is essential
to detect the presence of the inputs of the mean and variance functions. 
Since nonlinearity condition (a) in \Cref{th-multi} already requires that mean function $m_{\nfi}$ is not a constant function, we only have to add the non-constant condition on variance function $v_{\nfi}$:
\identifiability*
\begin{proof}
  From \Cref{th-multi} (and \Cref{remark-multi}), 
  Conditions (A), (B), and (C) are sufficient to identify 
  the moment-agnostic causal graph.
  Hence, it is sufficient to show the identifiability of mask matrices $\biB^{\mathcal{I}}_{\nfi} \in \{0, 1\}^{|\mathcal{I}| \times |\mathrm{pa}(\nfi)|}$ for separating a superset of parental variables $\pafi = \pamfi \cup \pavfi$ into mean and variance causes $\pamfi = \biB^{\mathrm{pa}^M(\nfi)}_{\nfi} \pafi$ and $\pavfi = \biB^{\mathrm{pa}^V(\nfi)}_{\nfi} \pafi$, for each $\nfi = 1, \dots, \nf$.

  As discussed in \Cref{subsec-problem},
  each endogenous variable $X_{\nfi}$ in the mean-variance HNM follows a conditional Gaussian: 
  \begin{align}
    X_{\nfi} \mid \pafi \sim \mathcal{N}(m_{\nfi}(\pamfi), c_{E_{\nfi}} \cdot (v_{\nfi}(\pavfi))^2 ), \label{eq-proof6}
  \end{align}
  where $c_{E_{\nfi}} \coloneqq \V[E_{\nfi}]$ is the variance of zero-mean noise $E_{\nfi}$. 
  Using mask matrices, the conditional Gaussian in \eqref{eq-proof6} can be equivalently rewritten as
  \begin{align}
    X_{\nfi} \mid \pafi \sim \mathcal{N}(m_{\nfi}(\biB^{\mathrm{pa}^M(\nfi)}_{\nfi} \pafi), c_{E_{\nfi}} \cdot (v_{\nfi}(\biB^{\mathrm{pa}^V(\nfi)}_{\nfi} \pafi))^2 ). \label{eq-proof6-original}
  \end{align}

  Suppose that under different mean and variance functions,  
  this conditional Gaussian distribution can be identical for different parental variables $\pamfi \neq \pammfi$ and $\pavfi \neq \pavvfi$:
  \begin{align}
    \begin{aligned}
    &\mathcal{N}(m_{\nfi}(\biB^{\pamfi}_{\nfi} \pafi), c_{E_{\nfi}} \cdot (v_{\nfi}(\biB^{\pavfi}_{\nfi} \pafi) )^2)  \\
    = 
    &\mathcal{N}(m'_{\nfi}(\biB^{\pammfi}_{\nfi} \pafi), c_{E_{\nfi}} \cdot (v'_{\nfi}(\biB^{\pavvfi}_{\nfi} \pafi) )^2 ), 
    \end{aligned} \label{eq-proof6-unidentifiable}
  \end{align}
  where $m_{\nfi} \neq m'_{\nfi}$, $v_{\nfi} \neq v'_{\nfi}$, $\biB^{\pamfi}_{\nfi} \neq \biB^{\pammfi}_{\nfi}$, and $\biB^{\pavfi}_{\nfi} \neq \biB^{\pavvfi}_{\nfi}$. Distributional equality \eqref{eq-proof6-unidentifiable} implies that the conditional mean and variance are identical. Formally, for any $\pafi$'s values and for some $m_{\nfi} \neq m'_{\nfi}$, $v_{\nfi} \neq v'_{\nfi}$, $\biB^{\pamfi}_{\nfi} \neq \biB^{\pammfi}_{\nfi}$, and $\biB^{\pavfi}_{\nfi} \neq \biB^{\pavvfi}_{\nfi}$, the following equalities must hold:
  \begin{align}
    &m_{\nfi}(\biB^{\pamfi}_{\nfi} \pafi) = m'_{\nfi}(\biB^{\pammfi}_{\nfi} \pafi) \label{eq-proof6-unidentifiable-mean} \\
    &c_{E_{\nfi}} \cdot (v_{\nfi}(\biB^{\pavfi}_{\nfi} \pafi))^2 = c_{E_{\nfi}} \cdot (v'_{\nfi}(\biB^{\pavvfi}_{\nfi} \pafi) )^2.  \label{eq-proof6-unidentifiable-var} 
  \end{align}
  Since $\biB^{\pamfi}_{\nfi} \neq \biB^{\pammfi}_{\nfi}$, there exists at least one variable that is included in one of $\pamfi = \biB^{\mathrm{pa}^M(\nfi)}_{\nfi} \pafi$ and $\pammfi = \biB^{\pammfi}_{\nfi} \pafi$, but not in the other. Assume, without loss of generality, that $\pammfi$ contains the variables that are missing in $\pamfi$. Those variables are not inputs of function $m_{\nfi}$, yet Eq. \eqref{eq-proof6-unidentifiable-mean} must hold for any $\pafi$'s values. Therefore, these additional variables must have \textbf{no influence on the output of function $m'_{\nfi}$}, implying that $m'_{\nfi}$ must be constant with respect to these variables. However, this violates the non-constant conditions on $m'_{\nfi}$.  
  A similar discussion holds for variance function $v_{\nfi}$ in \eqref{eq-proof6-unidentifiable-var}. Thus we prove the identifiability of mean and variance causal graphs $\cgm$ and $\cgv$.
\end{proof}

\section{Overview of Differentiable Sampling Models} \label{sec-dds}

\subsection{Gumbel Softmax Distribution} \label{subsec-gsm}

The Gumbel Softmax distribution \citep{jang2017categorical} provides 
a differentiable approximation for addressing the non-differentiability of the sampling operation of categorical variables. 
This approximation is needed to learn the parameters of neural-network-based discrete sampling models
by propagating gradients in standard backpropagation.

The core idea lies in the traditional statistical technique called \textit{Gumbel-Max trick} \citep{gumbel1954statistical}, 
which performs sampling from a $C$-class categorical distribution 
with log probabilities $[\phi_0, \dots, \phi_{C-1}]$ ($C \geq 2$)
by taking $\text{argmax}$ over the log probabilities perturbed by Gumbel noise $g_c \sim \text{Gumbel}(0)$:
\begin{equation}
    Z = \underset{c \in \{0, \dots, C-1\}}{\text{arg max}}\  \left\{ \phi_c + g_c \right\}, \label{eq-GumbelMax}
\end{equation}
where $c \in \{0, \dots, C-1\}$ denotes a class index. 

The  $\text{argmax}$ operator of Gumbel-Max trick in \eqref{eq-GumbelMax} is non-differentiable with respect to parameters $\phi_0, \dots, \phi_{C-1}$,
thus hindering gradient-based parameter learning.
To make this $\text{argmax}$ operator differentiable,
Gumbel-Softmax trick replaces it with 
a differentiable softmax function:
\begin{equation}
    \tilde{Z}_{c} = \frac{\mathrm{e}^{(\log \phi_c + g_c)/\tau}}{\sum_{c'=0}^{C-1} {\mathrm{e}^{(\log \phi_{c'} + g_{c'})/\tau}}} \in [0, 1],
\end{equation}
where $g_c \sim \text{Gumbel}(0)$ is a standard Gumbel noise, and
$\tau$ is a temperature parameter that controls the smoothness of the softmax function. As $\tau \to 0$, vector $[\tilde{Z}_0, \dots, \tilde{Z}_{C-1}]$ approaches a one-hot vector, mimicking a categorical sample.

In this paper, 
we employ the Gumbel-Softmax distribution model with the number of classes $C= 2$
as a sampling model for upper-triangular matrix elements 
$U^M_{i, j}, U^V_{i, j} \in \{0, 1\}$ for $i, j \in \{1, \dots, \nf\}$.

\subsection{Permutation Sampling with SoftSort Function} \label{subsec-dperm}

\citet{charpentier2022differentiable} sample the continuous relaxation of a $\nf \times \nf$ permutation matrix 
by dealing with the non-differentiability of the categorical sampling operation over $\{1, \dots, \nf\}$.

Their sampling procedure is founded on the Gumbel Top-$K$ trick \citep{vieira2014gumbel},
which samples $K$ categorical values over $\{1, \dots, \nf\}$ ($\nf \geq 2$) without replacement by iteratively selecting the top $K$ values of the logits perturbed by standard Gumbel noise: 
\begin{equation}
    I_1, \dots, I_K = \underset{c \in \{1, \dots, \nf\}}{\text{arg top-K}} \{\log \psi_c + g_c \}, \label{eq-GumbelTopK}
\end{equation}
where $I_1, \dots, I_K \in \{1, \dots, \nf\}$ are the sampled class indices,
$\text{arg top-K}$ denotes an operation that selects the $K$ largest values from the inputs,
$\psi_c$ is the probability of the categorical distribution for class $c \in \{1, \dots, \nf\}$,
and $g_c \sim \text{Gumbel}(0)$ is the standard Gumbel noise.
By setting $K=\nf$, Eq. \eqref{eq-GumbelTopK} can be used to obtain
a permutation that sorts vector $[\log \psi_1 + g_1, \dots, \log \psi_{\nf} + g_{\nf}]$ in  ascending order.
However, the $\text{arg top-K}$ operator 
is not differentiable, as with the $\text{argmax}$ operator in \eqref{eq-GumbelMax}.

For this reason, \citet{charpentier2022differentiable} 
replace the $\text{arg top-K}$ in \eqref{eq-GumbelTopK} 
with a SoftSort function \citep{prillo2020softsort},
which is differentiable with respect to its inputs.
This function is defined by applying the softmax function 
to a pairwise distance matrix as 
\begin{equation}
    \text{SoftSort}_{\tau}(\biv) = \text{softmax}\left( -\frac{d(\biv, \text{sort}(\biv))}{\tau} \right), \label{eq-softsort}
\end{equation}
where $d(\biv, \biv')$ is a differentiable semi-metric function (e.g., the L1 distance $d(x, y) = \|\biv- \biv'\|_1$) that returns a pairwise distance matrix,
$\text{sort}(\biv)$ denotes an operation that rearranges 
the elements of vector $\biv$ in ascending order,
and $\tau$ is a hyperparameter that controls the smoothness.
By applying a SoftSort function in \eqref{eq-softsort} to the perturbed log probabilities, \citet{charpentier2022differentiable}  sample a continuous relaxation of permutation matrix, i.e., $\gbiPi \in \R^{\nf \times \nf}$.

As described in \Cref{subsubsec-dm},
we directly employ the above sampling procedure
for permutation matrix $\biPi$.

\section{Experimental Settings} \label{sec-exp-setup}

\subsection{Baselines} \label{subsec-baselines}

In our experiments, we compare the performance of our method with the following baselines:
\begin{itemize}[leftmargin=0.5cm]
  \item Metropolis-coupled Markov chain Monte Carlo (\textbf{MC3}) \citep{madigan1995bayesian,giudici2003improving}, which performs a Metropolis-Hastings-based sampling whose target distribution is the posterior distribution over Bayesian network structures. We consider a tractable posterior formulation with the linear Gaussian models and employ the R-based implementation downloaded from \url{https://github.com/rjbgoudie/structmcmc} \citep{goudie2016gibbs}, which is published under GNU General Public License v3 (GPL-3).
  \item Differentiable DAG sampling (\textbf{DDS}) \citep{charpentier2022differentiable}, which approximately infers the posterior over a single causal DAG by solving a variational inference problem. Based on nonlinear ANM formulation, it evaluates the expected score of each DAG using the squared reconstruction loss. 
  \item Identifiable causal discovery under heteroscedastic data (\textbf{ICDH}) \citep{yin2024effective}, which learns the weighted adjacency matrix of a moment-agnostic causal graph based on a nonlinear HNM to output a single point estimate. We employ the original source codes on \url{https://github.com/naiyuyin/ICDH} and set the threshold for the inferred weighted adjacency matrix to $0.2$.  
  \item Heteroscedastic causal structure learning (\textbf{HOST}) \citep{duong2023heteroscedastic}, which offers a point estimate of a causal graph by performing conditional independence testing based on a nonlinear HNM. We use the implementation downloaded from \url{https://github.com/baosws/HOST} and set the significance level of the conditional independence test to $\alpha = 0.05$.
\end{itemize}

\subsection{Settings of Each Method} \label{subsec-setup}

\begin{itemize}[leftmargin=0.5cm]
  \item \textbf{SCM parameterization:} For our method, we parameterize the mean and variance functions $m_{\nfi}$ and $v_{\nfi}$ of our mean-variance HNM in \eqref{eq-MVHNM} using  2-layered MLPs. The nonlinear functions for \textbf{DDS}, \textbf{ICDH}, and \textbf{HOST} are parameterized using the MLPs by following their default settings.
  \item \textbf{Hyperparameter tuning:} For each method, we tune the hyperparameters by conducting a grid search based on objective function values. Regarding our method, we conduct a search over batch size $B \in \{32, 64, 128, 256\}$ and the number of hidden neurons $h \in \{8, 16, 32, 64\}$ in the 2-layered MLPs. To do so, we use the \texttt{optuna} package \citep{akiba2019optuna} in Python Package Index (PyPI) (licensed under GNU GPL-2) with the fixed random seed. 
  \item \textbf{Optimization algorithm:} For our method, we can use any gradient-based optimization algorithm. In our experiments, following the ICDH method \citep{yin2024effective}, we use the Limited-memory Broyden–Fletcher–Goldfarb–Shanno (LBFGS) algorithm, which is a popular optimization scheme in the family of quasi-Newton methods. We employ the LBFGS wrapper for Pytorch on \url{https://gist.github.com/arthurmensch/c55ac413868550f89225a0b9212aa4cd}, which is released under the MIT License and provided on an "AS IS" basis. To implement our prior knowledge incorporation approach presented in \Cref{subsec-priork}, we employ the \texttt{quadprog} package in Python Package Index (PyPI) (licensed under GNU GPL-2). 
\end{itemize}

\subsection{Simulated Data Generation Processes} \label{subsec-synthdata}

This section describes the generation processes of the synthetic and semi-synthetic data used in \Cref{subsec-synth}.

\subsubsection{Mean-Variance HNM Datasets} \label{subsubsec-mvhnm-data}

We randomly sample the ground truth mean and variance causal graphs $\cgm$ and $\cgv$.
For sparse graph setups, we use an ER model with $1$ expected edge per node for $\nf = 5$ and $2$ expected edges per node for $\nf = 10, 20$, and $50$.\footnote{For $\nf = 5$, we use an ER model with $1$ expected edge, not $2$ expected edges, because the ER model with $2$ expected edges always produces a complete graph for $\nf = 5$, thus losing randomness.}
For dense graph setups, we use an ER model with $4$ expected edges per node for $\nf = 10, 20$.
To guarantee the acyclicity of the sampled adjacency matrices, we obtain upper-triangular matrices $\biUm$ and $\biUv$ by masking the lower-triangular elements of the sampled adjacency matrices 
and randomly permuting the node indices.

Using the randomly sampled causal graphs, 
we sample the data from a mean-variance HNM parameterized with randomly initialized MLPs.
To sample the values of variable $X_{\nfi}$ ($\nfi \in \{1, \dots, \nf\}$),
we formulate the structural equation as
\begin{align}
  X_{\nfi} = m_{\nfi}\left(\pamfi\right) + \mathrm{e}^{v_{\nfi}\left(\pavfi\right)}\ E_{\nfi}, \label{eq-synthdataHNM}
\end{align}
where $m_{\nfi}(\cdot)$ and $v_{\nfi}(\cdot)$ are MLPs with randomly initialized parameters, and $E_{\nfi} \sim \mathcal{N}(0, 1)$ is noise that follows the standard Gaussian distribution. As with \citet{yin2024effective}, we use 1 hidden layer with 100 neurons and sigmoid activation for MLPs $m_{\nfi}(\cdot)$ and $v_{\nfi}(\cdot)$.

\subsubsection{Linear Gaussian ANM Datasets} \label{subsubsec-linanm-data}

Using the moment-agnostic causal graph randomly sampled from the ER model, 
we generate linear Gaussian ANM datasets.

For each $X_{\nfi}$ ($\nfi \in \{1, \dots, \nf\}$),
we sample its value according to the following linear structural equation:
\begin{align}
  X_{\nfi} = \biw^{\top}_{\nfi} \pafi + E_{\nfi}, \label{eq-linanm}
\end{align}
where $E_{\nfi}$ is the standard Gaussian noise, and $\biw$ is a linear coefficient vector randomly sampled from a uniform distribution:
\begin{align}
w_{i, \nfi} = S_{i,\nfi}\cdot \bigl(0.5 + 0.8\,U_{i, \nfi}\bigr) \quad \text{for } i \in \mathrm{pa}(\nfi),
\end{align}
where $S_{i, \nfi} \sim \mathrm{Unif}\{+1,-1\}$ and $U_{i, \nfi} \sim \mathrm{Unif}(0,1)$ are independent.

\subsubsection{Nonlinear Gaussian ANM and HNM Datasets} \label{subsubsec-anmhnm-data}

We create ANM and HNM datasets
by randomly generating a ground truth moment-agnostic causal graph from the ER model, where the expected number of edges is given in the same way as the sparse graph setups described in \Cref{subsubsec-mvhnm-data}.

We generate the ANM datasets
by considering the structural equation for each variable $X_{\nfi}$ ($\nfi \in \{1, \dots, \nf\}$):
\begin{align}
  X_{\nfi} = m_{\nfi}(\pafi) + E_{\nfi}, \label{eq-ANM}
\end{align}
where $m_{\nfi}(\cdot)$ is a nonlinear function parameterized with a randomly initialized MLP, and $E_{\nfi} \sim \mathcal{N}(0, \sigma_{\nfi}^2)$ is a Gaussian noise whose variance is sampled from uniform distribution by $\sigma_{\nfi}^2 \sim \mathrm{U}(0.5, 2)$.

Regarding the HNM datasets,
we parameterize the HNM structural equation:
\begin{align}
  X_{\nfi} = m_{\nfi}\left(\pafi\right) + \mathrm{e}^{v_{\nfi}\left(\pafi\right)}\ E_{\nfi}, \label{eq-HNM-data}
\end{align}
using randomly initialized MLPs as $m_{\nfi}(\cdot)$ and $v_{\nfi}(\cdot)$ and standard Gaussian noise $E_{\nfi} \sim \mathcal{N}(0, 1)$. We formulate each MLP in the same way as in \Cref{subsubsec-mvhnm-data}.

\subsubsection{Nonlinear Non-Gaussian HNM Datasets} \label{subsubsec-nongauss-data}

We generate synthetic datasets with non-Gaussian heteroscedastic noise by modifying the generation process for the (original) HNM datasets in \Cref{subsubsec-anmhnm-data}. 
In particular, we replace the Gaussian noise $E_{\nfi}$ in Eq. \eqref{eq-HNM-data} with the following non-Gaussian noises:
\begin{itemize}
  \item Laplace noise with location 0 and scale 1: $E_{\nfi} \sim \text{Laplace}(0, 1)$
  \item Student-t noise with degree of freedom 3: $E_{\nfi} \sim \text{Student-}t(3)$
\end{itemize}

\subsubsection{Semi-Synthetic Datasets (SERGIO Datasets)} \label{subsubsec-sergio-data}

Based on the well-known fact that real-world gene regulatory networks often exhibit scale-free properties \citep{albert2005scale},
we randomly draw the ground truth mean and variance causal graphs for semi-synthetic datasets, 
using the SF model with the expected degree $2$.

Using the random causal graphs described above, we generate the semi-synthetic datasets.
We download the SERGIO simulator from \url{https://github.com/PayamDiba/SERGIO} \citep{dibaeinia2020sergio} (licensed under GNU GPL-3) and modify it such that the ground truth mean and variance causal graphs are different. 
As described in \Cref{subsec-synth}, this simulator simulates the gene expression data by sampling from the steady state of a stochastic differential equation (called a chemical Langevin equation (CLE)) that represents the rate of the biochemical reactions in a gene regulatory network.
Since this differential equation consists of both mean and additive noise terms 
following zero-mean white noise Gaussian processes, and both are affected by the identical set of parental variables (see the formulation detail in \citep{dibaeinia2020sergio}),
it can be regarded as a special case of the original HNM. 
For this reason, to obtain the datasets based on the mean-variance HNM,
we use different parental variables between the mean and noise terms in this differential equation 
and perform sampling from its steady state.

\subsection{Real-World Data} \label{subsec-realdata}

We download the Sachs dataset from the link 
\url{https://ln5.sync.com/dl/b442986b0#5xpiy2n2-q9j87qze-kydrb7wn-xgqjiw2c}, 
which is shared by \citet{charpentier2022differentiable}. 

\subsection{Performance Metrics} \label{subsec-metrics}

We evaluate the performance of each method 
with the following widely used metrics:
\begin{itemize}[leftmargin=0.5cm]
  \item Structural Hamming distance (\textbf{SHD}), which measures the number of edge additions, removals, and reversals to turn the inferred causal graph into a ground truth causal graph. In \Cref{fig-diffHNMNN}, we report the \textbf{SHD error rate}, defined as each method’s SHD divided by the maximum possible SHD, i.e., $\mathrm{SHD}/\binom{d}{2}$, where $d$ is the number of nodes.
  \item Expected structural Hamming distance ($\E$-\textbf{SHD}), which is an expected value of SHD. Let $\hat{\pr}(\biA \mid \mathcal{D})$ be an inferred posterior over DAG adjacency matrix $\biA$. Then the expected SHD is estimated using $n_{\mathrm{MC}}$ Monte Carlo samples as
  \begin{align}
    \E\text{-\textbf{SHD}} \simeq \frac{1}{n_{\mathrm{MC}}} \sum_{i=1}^{n_{\mathrm{MC}}} \text{\textbf{SHD}}(\biA^{(i)}, \biA^{\mathrm{True}}), \label{eq-E-SHD}
  \end{align}
where $\biA^{\mathrm{True}}$ is the ground truth causal graph, and $\biA^{(1)}, \dots, \biA^{(n_{\mathrm{MC}})} \sim \hat{\pr}(\biA \mid \mathcal{D})$ are the Monte Carlo samples drawn from inferred posterior $\hat{\pr}(\biA \mid \mathcal{D})$. We set the number of Monte Carlo samples to $n_{\mathrm{MC}} = 2000$.
  \item F1 score, which compares the presence of each inferred edge with the ground truth edge set. 
  \item Structural Intervention Distance (\textbf{SID}) \citep{peters2015structural}, which evaluates how correctly the inferred graph offers adjustment sets, whose marginalization yields the correct interventional distributions. By definition, this score metric can be applied only for moment-agnostic inference performance evaluation, not for mean and variance causal graph inference performance evaluation.
  \item Total variation (\textbf{TV}) and Kullback-Leibler divergence (\textbf{KL}), which measure the distributional discrepancy between the inferred and ground truth posteriors. 
\end{itemize}
As described in \Cref{sec-exp},
we evaluate $\E$-\textbf{SHD} for the Bayesian methods (\textbf{Proposed}, \textbf{MC3}, and \textbf{DDS}) and compute \textbf{SHD} for the point estimation methods (\textbf{ICDH} and \textbf{HOST}).

\subsection{Computing Infrastructure} \label{sec-computing}
	
For our experiments, we used a 64-bit Ubuntu machine with 2.9GHz AMD EPYC 7513 32-core (x2) CPUs, NVIDIA RTX A6000 PCI-EX16 Gen4 (x8) GPUs, and 512-GB RAM.

\section{Additional Experimental Results} \label{sec-additional-exp}








\subsection{Moment-Agnostic Causal Graph Inference Performance on ANM and HNM Datasets} \label{subsec-ANM-HNM}







Using the ANM and HNM datasets described in \Cref{subsubsec-anmhnm-data}, we evaluate the performance of our method in inferring the moment-agnostic causal graph structure.

\Cref{fig-sameANMNN,fig-sameHNMNN} present the performance on the ANM and HNM datasets, respectively. 
Although our method is not designed for moment-agnostic causal graph inference, it achieves comparable performance to the baselines tailored for the ANMs and the HNMs. 
These results demonstrate that despite the model complexity for inferring the mean and variance causal graphs, our method effectively learns the model parameters
by leveraging the differentiable model formulation (\Cref{subsubsec-dm}) and the heteroscedastic noise regression techniques (\Cref{subsec-difficulty}),
thus successfully inferring the moment-agnostic causal graph structure.

\subsection{Comparison with VarSort} \label{subsec-varsort}

VarSort method \citep{reisach2021beware}, which simply determines the permutation of the causal DAG based on the marginal variance order, is a standard baseline for testing the methods based on ANMs.

However, we exclude Varsort from our main comparison in \Cref{subsec-synth} and \Cref{subsec-real}. 
The reason is identical to the one discussed in \citet{yin2024effective}:
The marginal variance order estimation is unstable due to heteroscedastic noise,
leading to large variance in estimated causal node orderings.

Below, we elaborate on this point from a theoretical and empirical perspective.

\subsubsection{Theoretical Comparison}

We illustrate the formulation difference of marginal variance between the ANM and our mean-variance HNM.
From the law of total variance, 
the marginal variance of a variable $X_{\nfi}$ following the mean-variance HNM is given by 
\begin{align*}
  \V[X_{\nfi}] &= \V[\E[X_{\nfi} \mid \pamfi]] + \E[\V[X_{\nfi} \mid \pavfi]] \\
  &= \V[m_{\nfi}(\pamfi)] + \E[\left( v_{\nfi}(\pavfi)\right)^2],
\end{align*}
where $m_{\nfi}(\cdot)$ and $v_{\nfi}(\cdot)$ are the mean and variance functions of the mean-variance HNM, respectively.

Hence, the ANM and mean-variance HNM datasets have the following marginal variance:
\begin{itemize} [leftmargin=0.5cm]
\item Under homoscedasticity in the ANM datasets,
the second term is constant, as the variance function $v_{\nfi}(\cdot)$ is constant in the ANM (\Cref{sec-background}). 
\item Under heteroscedasticity in the mean-variance HNM datasets, however, the second term is non-constant, and the order of marginal variances can vary substantially.
\end{itemize}
For this reason, we have the following difference. Under the ANM datasets, the estimated marginal variance order can often be aligned with the causal node ordering, which is why the VarSort method serves as an important baseline for such data. Under the mean-variance HNM datasets, however, the estimated marginal variance order is less likely to be aligned with the causal node ordering.

\subsubsection{Empirical Comparison}

We present the empirical comparison results between our method and VarSort,
 using mean-variance HNM datasets (\Cref{subsubsec-mvhnm-data}).

\Cref{table-varsort} presents the (expected) SHD over 20 runs.
As expected, the performance of VarSort is unstable, with a large estimation variance.
These results suggest that its behavior is not robust but may align with ground truth purely by chance, leading to similar average SHD scores.

By contrast, the HNM-based methods, including our method, consistently achieve smaller variances in SHD, 
demonstrating their robustness in causal graph inference under heteroscedastic noise.

\newpage

\begin{table*}[t]
  \centering
  \caption{SHD score comparison with VarSort on mean-variance HNM datasets ($n=500$) with number of nodes $d = 5, 10$}
  \scalebox{1.1}{  
  \begin{sc}  
\begin{tabular}{lllll}
\toprule
Method &  Mean ($d=5$) &  Variance ($d=5$) & Mean ($d=10$) & Variance ($d=10$) \\
\midrule
Proposed & $\bm{2.34 \pm 0.20}$ &  $\bm{3.33 \pm 0.30}$& $\bm{15.5 \pm 1.2}$ & $\bm{19.3 \pm 1.1}$\\
ICDH	&$5.05 \pm 0.42$ & $4.55 \pm 0.40$	& $20.1 \pm 1.1$ & $20.9 \pm  1.2$ \\
HOST &	$4.84 \pm  0.49$ & $4.60 \pm 0.45$ & $19.6 \pm  1.8$ &	$21.6 \pm  1.4$ \\
VarSort &	$5.00 \pm 1.00$ &	$6.00 \pm 1.66$ &	$18.5 \pm 6.2$ &	$21.9 \pm 4.4$\\
\bottomrule
\end{tabular}
\end{sc}
  }
  \label{table-varsort}
  \end{table*}

    \begin{figure*}[t]
    \includegraphics[height=7.7cm]{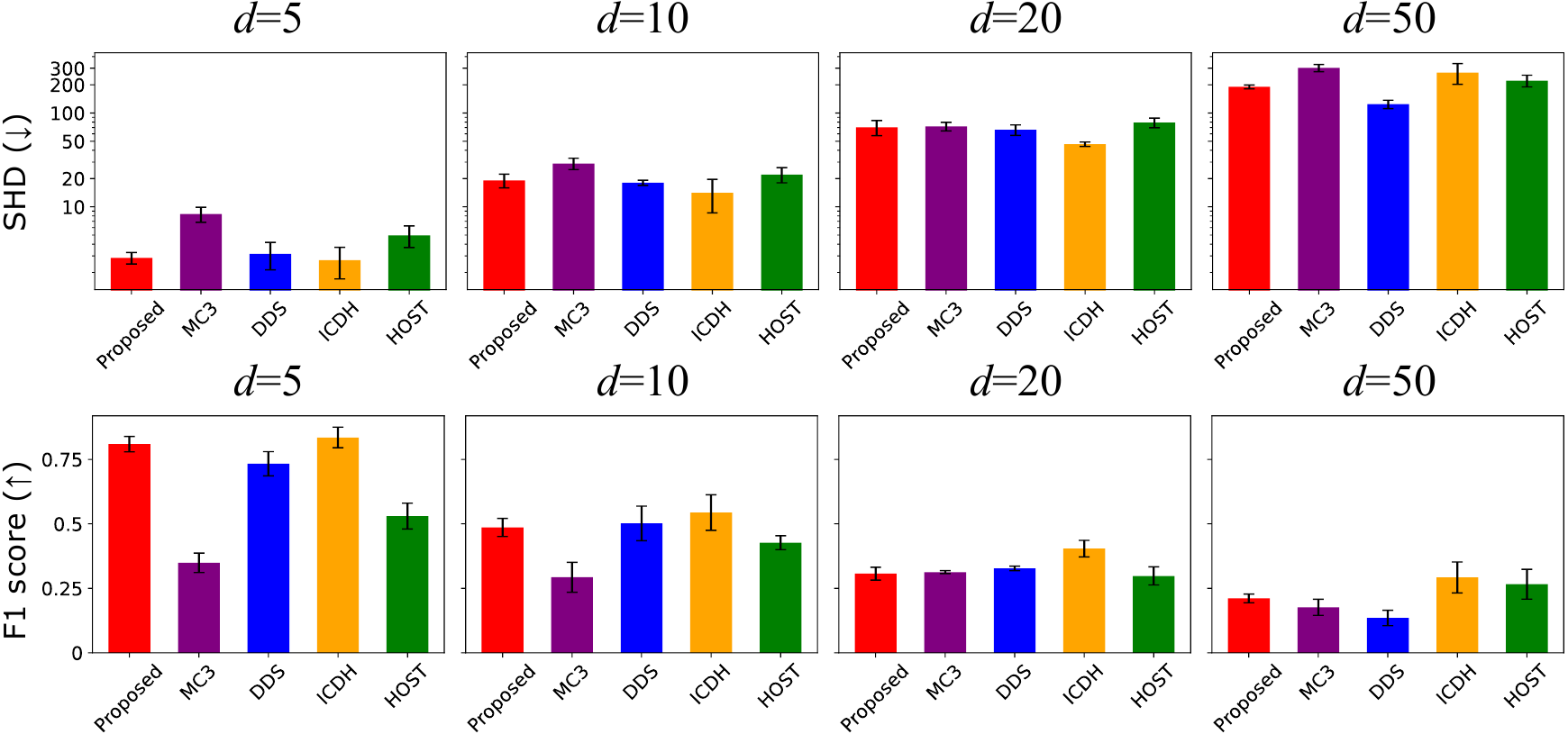}
  \centering 
  \caption{Moment-agnostic causal graph inference performance on nonlinear Gaussian ANM datasets ($n=500$) with number of nodes $d = 5, 10, 20, 50$:
  $\downarrow$ and $\uparrow$ denote "lower is better" and "higher is better".} 
  \label{fig-sameANMNN}
  \end{figure*}
  
  \begin{figure*}[t]
    \includegraphics[height=7.7cm]{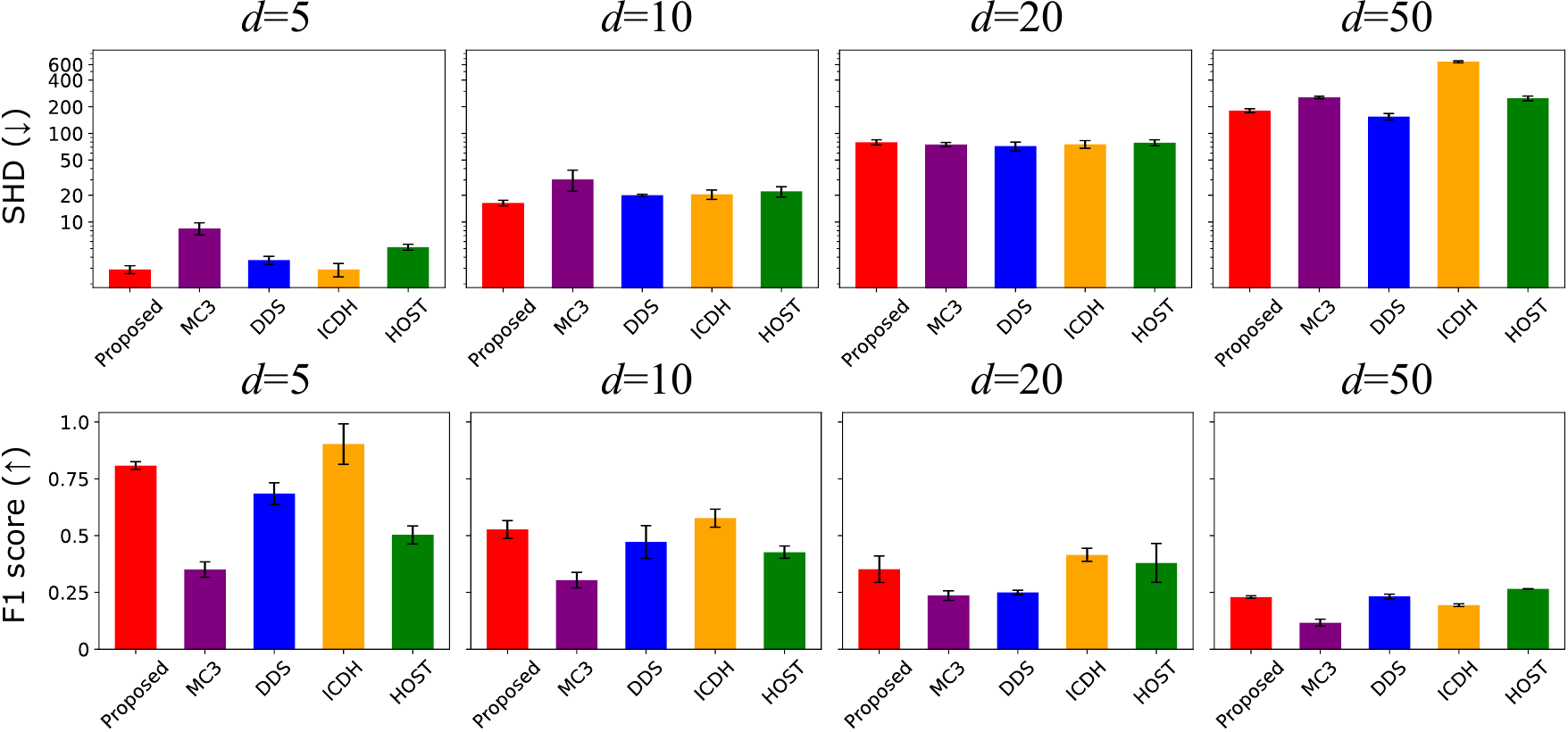}
  \centering 
  \caption{Moment-agnostic causal graph inference performance on nonlinear Gaussian HNM datasets ($n=500$) with number of nodes $d = 5, 10, 20, 50$:
  $\downarrow$ and $\uparrow$ denote "lower is better" and "higher is better".} 
  \label{fig-sameHNMNN}
  \end{figure*}

\end{document}